\documentclass{article} 
\usepackage{iclr2027_conference,times}

\usepackage{amsmath,amsfonts,bm}

\def\eqref#1{equation~\ref{#1}}

\def\1{\bm{1}}

\DeclareMathAlphabet{\mathsfit}{\encodingdefault}{\sfdefault}{m}{sl}
\SetMathAlphabet{\mathsfit}{bold}{\encodingdefault}{\sfdefault}{bx}{n}

\usepackage{hyperref}       
\usepackage{url}            
\usepackage{booktabs}       
\usepackage{amsfonts}       
\usepackage{nicefrac}       
\usepackage{microtype}      
\usepackage{xcolor}         
\usepackage{graphicx}
\usepackage{algorithm}
\usepackage{algorithmic}
\usepackage{url}
\usepackage{color}
\usepackage{colortbl}
\usepackage{multirow}
\usepackage{multicol}
\usepackage{diagbox}
\usepackage{wrapfig}
\usepackage{amsmath,amssymb}
\usepackage{makecell}
\usepackage{appendix}
\usepackage{subcaption}
\usepackage{wrapfig}
\usepackage{amsthm,mathtools}

\newtheorem{proposition}{Proposition}

\theoremstyle{definition}

\theoremstyle{remark}

\usepackage{tcolorbox}
\usepackage{xcolor}
\definecolor{promptorange}{RGB}{204, 102, 0}   
\definecolor{promptred}{RGB}{180, 0, 0}         
\definecolor{paleblue}{RGB}{235,245,251}

\title{Back to the Definition: Estimating Step-Level Advantages via Trajectory Graphs for Agentic Reinforcement Learning}

\author{Xincheng Yao$^{1}$\thanks{Work done during internship at Tencent AI Platform Department.}, Haobo Fu$^{2}$, Weiming Liu$^{2}$, Chongyang Zhang$^{1,3}$\thanks{Corresponding Author.} \\
  $^1$School of Information Science and Electronic Engineering, Shanghai Jiao Tong University.\\
  $^2$Tencent AI Platform Department.\\
  $^3$MoE Key Lab of Artificial Intelligence, AI Institute, Shanghai Jiao Tong University.\\
  \texttt{\{i-dover, sunny\_zhang\}@sjtu.edu.cn$^1$} \\
  \texttt{\{haobofu, weimingliu\}@tencent.com$^2$} \\
}

\iclrfinalcopy 
\begin{document}

\maketitle

\begin{abstract}
Group-based reinforcement learning (RL) methods, such as GRPO and its variants, have become a leading paradigm for training reasoning and agentic large language models (LLMs). While their group-normalized advantage estimation is reliable at the response level, it becomes systematically biased at the step level, since coarse-grained trajectory-level advantages are hard to accurately reflect the contribution of individual steps (\emph{i.e}, failed trajectories may contain valuable steps). Revisiting the foundational RL definition, we notice that GRPO's success on single-turn tasks stems from its advantage estimation strategy, which adheres to the basic definition: the mean reward of multiple actions sampled from the same state constitutes a credible state-value estimate. Extending the faithful estimation to step-level would in principle demand sampling multiple actions from each intermediate state, which is too costly on a per-state basis. To mitigate this issue, we can aggregate similar states across trajectories to better leverage global information. Since each trajectory is a chain of state-action transitions, cross-trajectory information flow requires modeling state-action transitions across trajectories, which naturally forms a directed graph. Building on this insight, we propose a \textbf{Gra}ph-based \textbf{F}aithful s\textbf{T}ep-level credit-assignment framework (\textbf{GRAFT}) that grafts all rollout trajectories into a trajectory graph, recovering node state-values via Bellman iteration on the graph, and assigning credit to each edge by the node value difference. Theoretically, the estimated step-level advantage faithfully adheres to
the basic advantage definition in RL. To further ensure the reliability of step-level advantage estimation, we further propose Graph GAE, which extends GAE to the trajectory graph for reducing the impact of state-value estimation bias. Experiments across a range of multi-turn agentic benchmarks show consistent gains over GRPO and superior performance compared to recent agentic RL algorithms. Code will be available at \url{https://github.com/xcyao00/GRAFT}.
\end{abstract}

\section{Introduction}

Reinforcement learning (RL) has emerged as a cornerstone for training reasoning and agentic large language models (LLMs) \citep{OpenAI-o1, DeepSeek-R1, Kimi-2.5, qwen35blog, Tongyi-DeepSearch, qwen-agentworld}. Among the various RL algorithms \citep{PPO, rloo, DPO, REINFORCE, GRPO} adapted to the LLM domain, Group Relative Policy Optimization (GRPO) \citep{GRPO} and its variants \citep{DAPO, GSPO, SAPO, HTPO} have gained the most significant traction, primarily due to their critic-free design. For each prompt, GRPO samples a group of $N$ responses and estimates the advantage of each response by normalizing its reward against the group-level statistics. Within the standard RL framework, this strategy is theoretically well-grounded at the response level: when the entire response is treated as a single action and the prompt $q$ as the initial state, the group mean exactly serves as a valid estimate of the state value $V_\pi(q)$, and the resulting normalized advantage aligns with the standard definition $A_\pi(q,a) = Q_\pi(q,a) - V_\pi(q)$ (see Sec.\ref{sec:preliminary}).

However, this mechanism shifts fundamentally in multi-turn tasks, where trajectory-level advantages are too coarse-grained to accurately capture the contribution of individual steps. A successful trajectory may contain redundant or erroneous steps that receive unwarranted credit, while a failed trajectory may include valuable steps whose benefits are obscured by subsequent mistakes. A common remedy is to employ a Process Reward Model (PRM) to assign step-wise rewards and extend the group-based advantage estimation at each step index $t$ \citep{GRPO, prime, agentprm}. However, training PRMs requires costly step-level annotations and often suffers from distributional shift when applied to out-of-distribution reasoning traces. Moreover, as analyzed in Sec.~\ref{sec:preliminary}, applying group normalization per step $t$ incurs systematic bias. Because $s_{i,t}$ is a trajectory prefix, different trajectories usually occupy different intermediate states at the same step index $t$; consequently, the ``group'' used to estimate $V_\pi(s_{i,t})$ aggregates samples from distinct states rather than a single shared state. The resulting bias is systematic and grows with trajectory diversity. The theoretically faithful solution is to resample $N-1$ additional continuations from each intermediate state, but this recovers unbiasedness at an $\mathcal{O}(N^2 T)$ rollout cost that is essentially unaffordable for long-horizon multi-turn training.

To mitigate this issue while preserving theoretically faithful advantage estimation, we should aggregate similar states across trajectories to better leverage global information. Since each trajectory can be regarded as a chain of state-action transitions, cross-trajectory information flow requires modeling state-action transitions across trajectories, which
naturally forms a directed graph, where nodes represent states and edges represent actions. With this insight, we can reorganize $N$ trajectories from isolated chains into a unified trajectory graph. Each node in this graph aggregates all actions executed from the same state across all trajectories, providing the on-state action groups required for a more credible step-level advantage estimation at no additional cost. 

Building on the trajectory graph insight, we propose a \textbf{Gra}ph-based \textbf{F}aithful s\textbf{T}ep-level credit assignment (\textbf{GRAFT}) framework, which grafts $N$ rollout trajectories into a unified trajectory graph so that more faithful step-level advantages can be estimated in accordance with the foundational RL definition. First, \emph{graph construction} canonicalizes states via exact matching or embedding-based similarity matching, merging semantically identical states across trajectories into a single node with a shared on-state action group. Second, to estimate the state-value for every node, \emph{value propagation} applies Bellman iteration over the graph using only the rewards attached to the terminal nodes. In sparse reward settings, the converged value estimation is a direct empirical approximation of the definition of a standard value function. Third, given these node values, we define the step advantage as $A_t = \gamma V(s_{t+1}) - V(s_t)$, which coincides with the standard advantage definition (see Sec.\ref{sec:method}). Additionally, to further ensure the reliability of step-level advantage estimation, we take advantage of Generalized Advantage Estimation (GAE), which is the de facto in RL that can reduce the impact of state-value estimation bias. Specifically, we propose Graph GAE, which extends GAE to the trajectory graph by computing a weighted sum of the subsequent multi-hop average advantage estimates.  Compared to prior methods, the step-level advantage estimates obtained by our method are theoretically sound and exactly aligned with the fundamental definition. Furthermore, GRAFT only requires outcome rewards, eliminating the reliance on PRM-assigned step-level rewards. Experiments across a range of multi-turn agentic benchmarks show consistent gains over GRPO and superior performance compared to recent agentic RL algorithms.


\section{Preliminary Analysis}
\label{sec:preliminary}

In this section, we revisit the advantage estimation problem from the perspective of standard definitions in reinforcement learning. We begin by formally defining the state-action (Q) function $Q_\pi$, the value function $V_\pi$, and the advantage function $A_\pi$:
\begin{align}
\label{eq:definition}
    & Q_\pi(s_t, a_t) = \mathbb{E}_{s_{t+1}, a_{t+1}, \dots}\bigg[\sum_{l=0}^{\infty}\gamma^lr(s_{t+l})\bigg] 
    & V_\pi(s_t) = \mathbb{E}_{a_t, s_{t+1}, a_{t+1}, \dots}\bigg[\sum_{l=0}^{\infty}\gamma^lr(s_{t+l})\bigg] \nonumber \\  
    & A_\pi(s_t, a_t) = Q_\pi(s_t, a_t) - V_\pi(s_t) 
\end{align}
where $a_t \sim \pi(a_t \mid s_t)$ and $s_{t+1} \sim P(s_{t+1} \mid s_t,a_t)$. The $Q_\pi(s_t, a_t)$ denotes the expected return when taking an action $a_t$ at state $s_t$, $Q_\pi(s_t, a_t) = \mathbb{E}_{s_{t+1}, a_{t+1}, \dots}[r(s_t) + \gamma r(s_{t+1}) + \gamma^2r(s_{t+2}) + \dots]$\footnote{If the $s_{t+1}$ is deterministic, $Q_\pi(s_t, a_t)$ is also equal to $r(s_t) + \gamma V_\pi(s_{t+1})$.}, where $r(s_t)$ is the reward for transition ($s_t$, $a_t$, $s_{t+1}$). Compared to $Q_\pi(s_t, a_t)$, the only difference in $V_\pi(s_t)$ is that the action $a_t$ needs to be expected, which means the expected return starting from state $s_t$. Then, the connection between $Q_\pi(s_t, a_t)$ and $V_\pi(s_t)$ is that $V_\pi(s_t) = \mathbb{E}_{a_t}[Q_\pi(s_t, a_t)]$. Thus, the advantage can be understood as ``the benefit of performing a specific action $a_t$ in state $s_t$ compared to the average effect of all actions at that state''.

In LLMs, the input prompt $q$ can be regarded as the initial state, denoted as $s_1$. For action, it can be defined at various levels of granularity: as the whole response $o$, individual tokens, or intermediate reasoning steps. In GRPO \citep{GRPO}, the whole response is treated as a single action, for which a scalar reward is assigned. Since the generation process is terminated, there will be no subsequent $a_{t+1}$. Moreover, state transitions in LLMs are inherently deterministic (\emph{i.e.}, $s_{t+1}$ is obtained by concatenating the generated tokens $a_t$ to the previous context $s_t$). Therefore, the expectation over $s_{t+1}$ in estimating $Q_\pi(s_t, a_t)$ can be omitted, also the $a_{t+1}$ and subsequent states and actions can all be omitted. Thus, the Q-value of ($q$, $o$) simplifies to $Q(q, o) = r$. Accordingly, the value of $q$ can be defined as $V(q) = \mathbb{E}_o[Q(q, o)] = \frac{1}{N}\sum_{i=1}^{N}Q(q, o_i) = \frac{1}{N}\sum_{i=1}^{N}r_i$. 

We now revisit the advantage estimation strategy in GRPO, where the advantage is calculated as:
\begin{equation}
\label{eq:grpo_advantage}
    \hat{A}_i = \frac{r_i - {\rm mean}(\{r_1, r_2, \dots, r_N\})}{{\rm std}(\{r_1, r_2, \dots, r_N\})}
\end{equation}
It can be found that this estimator aligns precisely with the standard definition of the advantage function in Eq.(\ref{eq:definition}), as $r_i = Q(q, o_i)$ and ${\rm mean}(\{r_i, r_2, \dots, r_N\}) = V(q)$ (GRPO has additional normalization). However, a critical mismatch arises: while the derivation treats the entire response as a single action, GRPO assigns $\hat{A}_i$ uniformly to each token within the response. 

To correctly utilize the advantage in Eq.(\ref{eq:grpo_advantage}), we need to treat the whole response as an action and compute the importance ratio at the sequence level. The sequence-level importance ratio can be derived from the likelihood decomposition, $\frac{\pi_\theta(o_i|q)}{\pi_{\theta_{old}}(o_i|q)} = \frac{\pi_\theta(o_{i,1}|q)\pi_\theta(o_{i,2}|q,o_{i,<2})\dots}{\pi_{\theta_{old}}(o_{i,1}|q) \pi_{\theta_{old}}(o_{i,2}|q,o_{i,<2})\dots}$. To avoid numerical instability caused by the cumulative multiplication, we can adopt the geometric mean. Accordingly, the importance ratio $w_i(\theta)$ for the response $o_i$ is defined as follows:
\begin{equation}
  w_i(\theta) = \bigg(\frac{\pi_\theta(o_i|q)}{\pi_{\theta_{old}}(o_i|q)}\bigg)^{\frac{1}{|o_i|}} = {\rm exp}\bigg(\frac{1}{|o_i|}\sum_{t=1}^{|o_i|}{\rm log}\frac{\pi_\theta(o_{i,t}|q,o_{i,<t})}{\pi_{\theta_{old}}(o_{i,t}|q,o_{i,<t})}\bigg)
\end{equation}
Then, the calibrated GRPO optimization objective is defined as follows:
\begin{equation}
\label{eq:grpo_c}
    \mathcal{J}_{GRPO-C}(\theta) = \mathbb{E}_{q \sim \mathcal{D}, \{o_i\}_{i=1}^{N} \sim \pi_{\theta_{old}}(\cdot|q)} \bigg[\frac{1}{N}\sum_{i=1}^{N}
     {\rm min}\Big(w_i(\theta)\hat{A}_i,{\rm clip}(w_i(\theta), 1 - \epsilon, 1 + \epsilon)\hat{A}_i\Big) \bigg]
\end{equation}

However, assigning a single advantage to the whole response still suffers from coarse granularity, especially in multi-turn agentic tasks that are inherently step-by-step. To this end, we further analyze how to perform step-level advantage estimation that can adhere to the standard definition. For $s_1$ (the prompt) and $a_{i,1}$ (the first step), the advantage is $\hat{A}_{i,1} = Q(s_1, a_{i,1}) - V(s_1) = r_{i,1} - \frac{1}{N}\sum_{j=1}^{N}r_{j,1}$, where $r_{i,1}$ is the reward for the first step of the $i$-th response. Here, we use the reward to approximate the return, strictly according to the definition, it should be $Q(s_1, a_{i,1}) = r_{i,1} + \gamma r_{i,2} + \dots$. Then, for $s_1$ and $a_1$, the advantage estimation can align with GRPO. However, the situation diverges from the second step onward. Since $s_2$ is the concatenation of $s_1$ and $a_1$, there is no guarantee that $s_{i,2}$ remains identical across all $N$ responses. The worst case is that if every $a_{i,1}$ differs, each $s_{i,2}$ will be distinct. For $s_{i,2}$ and $a_{i,2}$, the value $V(s_{2})$ cannot be reliably estimated because there are not $N$ samples of $a_{i,2}$ conditioned on the same $s_{i,2}$. Despite this, in GRPO with process supervision \citep{GRPO, prime}, the advantage is directly calculated by $\hat{A}_{i,t} = \frac{r_{i,t} - {\rm mean}(r_{1,t}, r_{2,t}, \dots, r_{N(t),t})}{{\rm std}(r_{1,t}, r_{2,t}, \dots, r_{N(t),t})}$, where $N(t)$ is the number of responses that reach step $t$. This introduces a systematic bias: actions $a_{j,t}, j \neq i$ that don't belong to $s_{i,t}$ are also incorrectly used to estimate $V(s_{t})$. To mitigate this bias, a straightforward solution is to adopt the Monte Carlo sampling. For $s_{i,t}$, we treat it as the prefix and sample $N-1$ extra $\{a^j_{i,t}\}_{j=1}^{N-1}$ from the LLM. With the existing $a_{i,t}$, these $N$ actions enable an unbiased estimation of the advantage $\hat{A}_{i,t}$. 




\section{Proposed Method}
\label{sec:method}

\begin{figure*}[ht]
    \centering
    \includegraphics[width=\textwidth]{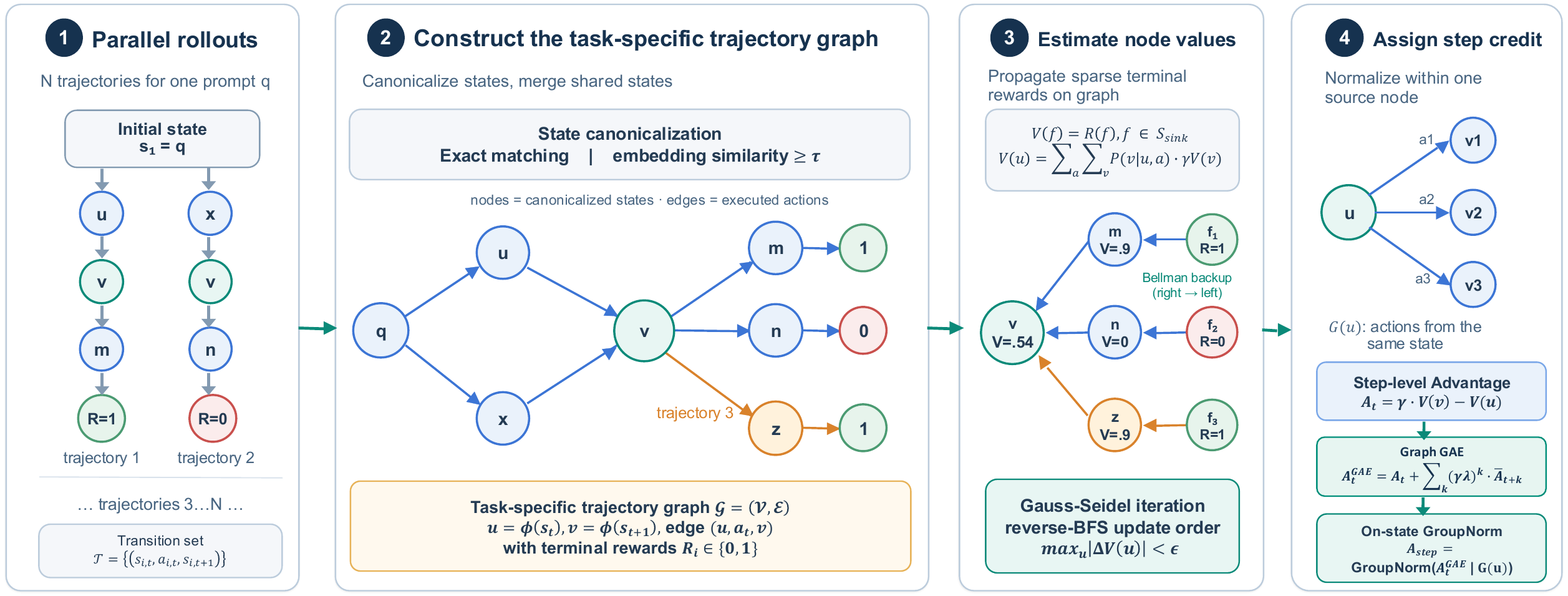}
    \caption{Method overview. Given $N$ rollouts for each prompt, GRAFT merges trajectories into a trajectory graph. Terminal rewards are then propagated backward via Bellman iteration to estimate node state-values. The state-value difference between two nodes \(\gamma V(s_{t+1})-V(s_t)\) is assigned as the step-level advantage, which coincides with the standard advantage function definition (see Sec.\ref{sec:step_advantage}).}
    \label{fig:framework}
\end{figure*}

The analysis in Sec.\ref{sec:preliminary} reveals a fundamental challenge in applying GRPO-style advantage estimation to agentic RL: the value function $V_\pi(s_{i,t})$ at an intermediate state $s_{i,t}$ cannot be reliably estimated unless multiple actions are sampled from the same state. The sampling-based remedy outlined in Sec.\ref{sec:preliminary} is theoretically sound but computationally prohibitive, requiring $\mathcal{O}(N^2 T)$ rollout cost.

To mitigate this issue while preserving definition-adherent estimation, we should aggregate similar states across trajectories to better leverage global information. Crucially, different trajectories are not fully independent, different actions can also reach semantically identical intermediate states (see experiments in Appendix \ref{sec:node_group_size}). This means all rollout data from a prompt can form a directed graph, where nodes represent states and edges represent actions, rather than being regarded as $N$ isolated chains. By aggregating all edges outgoing from the same node, we obtain a natural group of actions sampled from the same state, enabling more faithful step-level advantage estimation at no additional rollout cost. The full framework is illustrated in Fig.\ref{fig:framework}. Note that GiGPO \citep{GiGPO} has employed the way of aggregating cross-trajectory steps sharing the same state into a group to compute step-level advantages within the group. Compared to this plain method of aggregating all steps into multiple independent groups, we further discuss that our method would have more advantages over GiGPO in Appendix \ref{sec:discuss_with_gigpo}. 


\subsection{Graph Construction}

\textbf{Formal definition.} Let $\mathcal{T} = \{(s_{i,t}, a_{i,t}, s_{i,t+1})\}$ denotes the set of all transition tuples collected from $N$ parallel rollouts for a given task with prompt $q$. We construct a task-specific directed graph $\mathcal{G} = (\mathcal{V}, \mathcal{E})$, where each node $u,v \in \mathcal{V}$ corresponds to a canonicalized state, and each edge $(u,v,a) \in \mathcal{E}$ corresponds to a transition $s_t \xrightarrow{a} s_{t+1}$ such that $u = \phi(s_t)$ and $v = \phi(s_{t+1})$.

\textbf{State canonicalization.} In our framework, the state represents both the predefined environment state and the LLM response prefix up to a specific step (as discussed in Sec.\ref{sec:preliminary}). Two states $s$ and $s^\prime$ will be mapped to the same node if $\phi(s) = \phi(s')$. In practice, we can employ two canonicalization modes: (i) \textbf{exact matching}, applied when all states are predefined strings provided by the environment, here $\phi(s)$ is a deterministic hash of the state string; and (ii) \textbf{similarity-based matching}, applied when states are derived from raw LLM contexts, here $s$ and $s'$ are merged into a node if the cosine similarity between their encoded embedding vectors exceeds a threshold $\tau$, and $\phi(s)$ returns a unique identifier (UID) for the resulting cluster. The latter handles a key practical reality: language models frequently introduce minor lexical variations even when generating semantically identical content, rendering exact string matching insufficient for reliable state identification in unstructured settings.

\textbf{Sink nodes.} For each trajectory $i$, the final state $s_{i,T}$ (the state terminated or reached the max turns) is designated as a sink node with terminal reward $R_i \in \{0, 1\}$, where $0$ and $1$ indicate the failure and success of the trajectory, respectively.

\subsection{State-Value Estimation via Bellman Iteration}

Given the graph $\mathcal{G}$ and the rewards in the terminal nodes, our goal is to assign a state-value estimate $V(u)$ to every node $u \in \mathcal{V}$. To leverage the graph's transition structure, we can perform the classical Bellman value iteration on the graph: starting from the terminal nodes with their outcome rewards, we iteratively propagate value estimates backward through the graph until convergence. 

\textbf{Bellman equation.} For sink nodes $f \in \mathcal{S}_{\text{sink}}$, the value is fixed to the terminal reward: $V(f) = R(f), f \in \mathcal{S}_{\text{sink}}$. For non-sink nodes, we adopt the ``action-level'' aggregation scheme, where the node value is computed as the mean over distinct actions of the expected next-state value:
\begin{equation}
\label{eq:bellman_equation}
    V(u) = \sum_{a \in \mathcal{A}(u)}\hat{\pi}(a \mid u) \sum_{v \in \mathcal{V}} \hat{P}(v \mid u, a) \cdot \gamma \cdot V(v)
\end{equation}
where $\mathcal{A}(u)$ is the set of distinct actions executed from node $u$, and $\hat{\pi}(a \mid u)$ and $\hat{P}(v \mid u, a)$ are the empirical transition probability estimated from edge counts:
\begin{equation}
    \hat{\pi}(a \mid u) = \frac{\text{count}(u \xrightarrow{a} \cdot)}{\sum_{a^\prime} \text{count}(u \xrightarrow{a^\prime} \cdot)}, \quad\quad \hat{P}(v \mid u, a) = \frac{\text{count}(u \xrightarrow{a} v)}{\sum_{v'} \text{count}(u \xrightarrow{a} v')}
\end{equation}

\textbf{Theoretical connection.} The action-level Bellman equation is a direct empirical approximation of the standard value function (Eq.(\ref{eq:definition})). Under sparse reward settings (rewards are assigned only at sink nodes), the state-value satisfies $V(s_t) = \mathbb{E}_{a_t}[Q(s_t, a_t)]$ and $Q(s_t, a_t) = \mathbb{E}_{s_{t+1}}[\gamma V(s_{t+1})]$ (see Sec.\ref{sec:preliminary}). Combining these yields:
\begin{equation}
\label{eq:theory_connection}
    V(s_t) = \mathbb{E}_{a_t}\left[\mathbb{E}_{s_{t+1}}[\gamma V(s_{t+1})]\right] = \sum_{a \in \mathcal{A}(s_t)}\hat{\pi}(a \mid u) \sum_{v} \hat{P}(v \mid s_t, a) \cdot \gamma \cdot V(v)
\end{equation}
which is precisely the action-level update rule defined in Eq.(\ref{eq:bellman_equation}).

\textbf{Iterative solver.} We solve the Bellman equation via Gauss-Seidel value iteration \citep{Gauss-Seidel}. To accelerate convergence, we determine the updating order by computing the reverse-BFS distance from each node to the nearest sink node: nodes closer to a sink node are updated earlier, so information propagates from sinks back to source nodes in as few sweeps as possible. Nodes that cannot reach any sink node retain $V(u) = 0$, which constitutes the correct fixed point under sparse rewards. The iteration terminates when $\max_u |V^{(k+1)}(u) - V^{(k)}(u)| < \epsilon$ (default $\epsilon = 10^{-8}$). Since $\gamma < 1$, the Bellman operator is a contraction mapping and convergence is guaranteed. The maximum number of iterations is set adaptively as $k_{\max} = \lceil \log(\epsilon) / \log(\gamma) \rceil + 50$ to ensure convergence for any $\gamma \in (0, 1)$.

\subsection{Step-Level Advantage Estimation}
\label{sec:step_advantage}

Given the converged value estimates $\{V(u)\}_{u \in \mathcal{V}}$, we define the step-level advantage for each executed transition $(s_t, a_t, s_{t+1})$ as:
\begin{equation}
\label{eq:step_advantage}
    A_t = \gamma \cdot V(\phi(s_{t+1})) - V(\phi(s_t))
\end{equation}

Notably, as analyzed in Sec.\ref{sec:preliminary}, we know that $A(s_t,a_t) = Q(s_t, a_t) - V(s_t) = r(s_t) + \gamma V(s_{t+1}) - V(s_t)$. In sparse-reward settings where the environment step reward  $r(s_t)$ is zero, the $A_t$ coincides exactly with the standard advantage function definition (\emph{i.e.}, which is also called TD(0) in RL). Intuitively, $A_t > 0$ indicates that action $a_t$ moves the agent toward a higher-value state (closer to success), and $A_t < 0$ signals a transition toward a lower-value state.

\begin{proposition}[Lower Advantage Estimation Error]
\label{prop:advantage_error} Given rollouts sampled from a fixed policy $\pi$ in a finite-horizon deterministic environment with sparse rewards. Define 
$$\Omega
=
\left\{
(s,a):
0<
\Pr\!\left(R(\tau)>0\mid S_t=s,A_t=a\right)
<1
\right\}$$
as the set of state-action pairs that admit both successful and failed
continuations. Let
\[
X_t^{G}
=
r_t+\gamma V^\pi(S_{t+1})-V^\pi(S_t)
\]
denote the graph-based step-level credit, and let
\[
X_t^{S}
=
G_t-V^\pi(S_t),
\qquad
X_t^{E}
=
R(\tau)-\mathbb E_{\tau \sim \pi}[R(\tau)\mid S_0=s_0]
\]
denote the population counterparts of the step-level credit of GiGPO and the trajectory-level credit of GRPO, respectively.
Then
\[
\mathcal E(X^{G})
\le
\min\left\{
\mathcal E(X^{S}),
\mathcal E(X^{E})
\right\}
\]
where $\mathcal E(X)
=
\mathbb E_{\tau\sim\pi}
\left[
\bigl(X_t-A^\pi(S_t,A_t)\bigr)^2
\right]$ is the population estimation error of a credit signal X. Moreover, the inequality is strict whenever $\Omega$ has positive visitation
probability under $\pi$.
\end{proposition}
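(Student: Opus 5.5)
The plan is to compute each error explicitly, using the determinism of the environment and the sparsity of the reward. Since transitions are deterministic, $S_{t+1}$ is a function of $(S_t,A_t)$. Hence $Q^\pi(S_t,A_t)=r_t+\gamma V^\pi(S_{t+1})$, and therefore
\[
X_t^{G}-A^\pi(S_t,A_t)=r_t+\gamma V^\pi(S_{t+1})-V^\pi(S_t)-Q^\pi(S_t,A_t)+V^\pi(S_t)=0 .
\]
So $\mathcal E(X^G)=0$. The non-strict inequality is then immediate, because $\mathcal E(\cdot)\ge 0$. All the real work is in the strictness claim: we must show that $\mathcal E(X^S)>0$ and $\mathcal E(X^E)>0$ whenever $\Omega$ has positive visitation probability.

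\textbf{GiGPO credit.} Here $X_t^S-A^\pi(S_t,A_t)=G_t-Q^\pi(S_t,A_t)$. By the tower property, $\mathcal E(X^S)=\mathbb E\bigl[\mathrm{Var}(G_t\mid S_t,A_t)\bigr]$. Under sparse rewards in a finite-horizon problem, $G_t=\gamma^{T-t}R(\tau)$ (or a discounting of $R$ that is a fixed function of the remaining length). For $(s,a)\in\Omega$, the event $R(\tau)>0$ has conditional probability strictly between $0$ and $1$. Since $R\in\{0,1\}$, this makes $G_t$ non-degenerate given $(S_t,A_t)=(s,a)$, so the conditional variance is strictly positive. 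Positive visitation probability of $\Omega$ then gives $\mathcal E(X^S)>0$. One subtlety: if episode lengths vary, the discount factor depends on the continuation. I would handle this by noting that $G_t=0$ exactly on failure and $G_t>0$ on success, so $G_t$ still takes at least two distinct values with positive probability.

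\textbf{GRPO credit.} Here $X_t^E-A^\pi=R(\tau)-V^\pi(s_0)-Q^\pi(S_t,A_t)+V^\pi(S_t)$. I would decompose this as
\[
\bigl(R(\tau)-\mathbb E[R\mid S_t,A_t]\bigr)+\bigl(\mathbb E[R\mid S_t,A_t]-V^\pi(s_0)-A^\pi(S_t,A_t)\bigr).
\]
The first term is a martingale-difference noise term. The second term is $(S_t,A_t)$-measurable. They are therefore orthogonal, and
\[
\mathcal E(X^E)\ge \mathbb E\bigl[\mathrm{Var}(R(\tau)\mid S_t,A_t)\bigr].
\]
This lower bound is strictly positive on $\Omega$ by the same argument as for GiGPO, which gives $\mathcal E(X^E)>0$.

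The step I expect to be the main obstacle is making the orthogonality and conditional-variance arguments rigorous at a fixed $t$, given that trajectories terminate at different times. The cleanest route is to define credits as zero after termination and to take the expectation over a fixed $t$ on the event $\{t<T\}$. Visitation of $\Omega$ must then be interpreted at that step; alternatively, one sums the errors over $t$, in which case positivity for some $t$ suffices.
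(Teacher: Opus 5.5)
Your proposal is correct and follows essentially the same route as the paper. Determinism gives $\mathcal E(X^G)=0$, and $\mathcal E(X^S)=\mathbb E[\operatorname{Var}(G_t\mid S_t,A_t)]$; the same bias--variance decomposition gives $\mathcal E(X^E)\ge \mathbb E[\operatorname{Var}(R\mid S_t,A_t)]$, and both lower bounds are strictly positive on $\Omega$ because $G_t$ and $R$ vanish on failure and are positive on success. Your handling of variable termination times (fixing $t$ on $\{t<T\}$ or summing over steps) is slightly more careful than the paper's appeal to a visitation distribution $d^\pi$, but changes nothing substantive.
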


The proof of Proposition \ref{prop:advantage_error} is provided in Appendix \ref{sec:appendix_proof}. This proposition identifies the structural benefit of our graph-based Bellman credit assignment. At the population level, graph-based TD credit replaces the trajectory-specific sampled continuation with the policy-averaged value of the successor state, thereby producing a Bellman-aligned credit signal for the current decision. Thus, the superiority of our GRAFT arises from merging trajectories at shared states and aggregating information from their different continuations via Bellman backups into a lower-error, state-specific credit signal.

\begin{wrapfigure}{l}{0.5\columnwidth} 
    \centering
    \vspace{-10pt}
    \includegraphics[width=\linewidth]{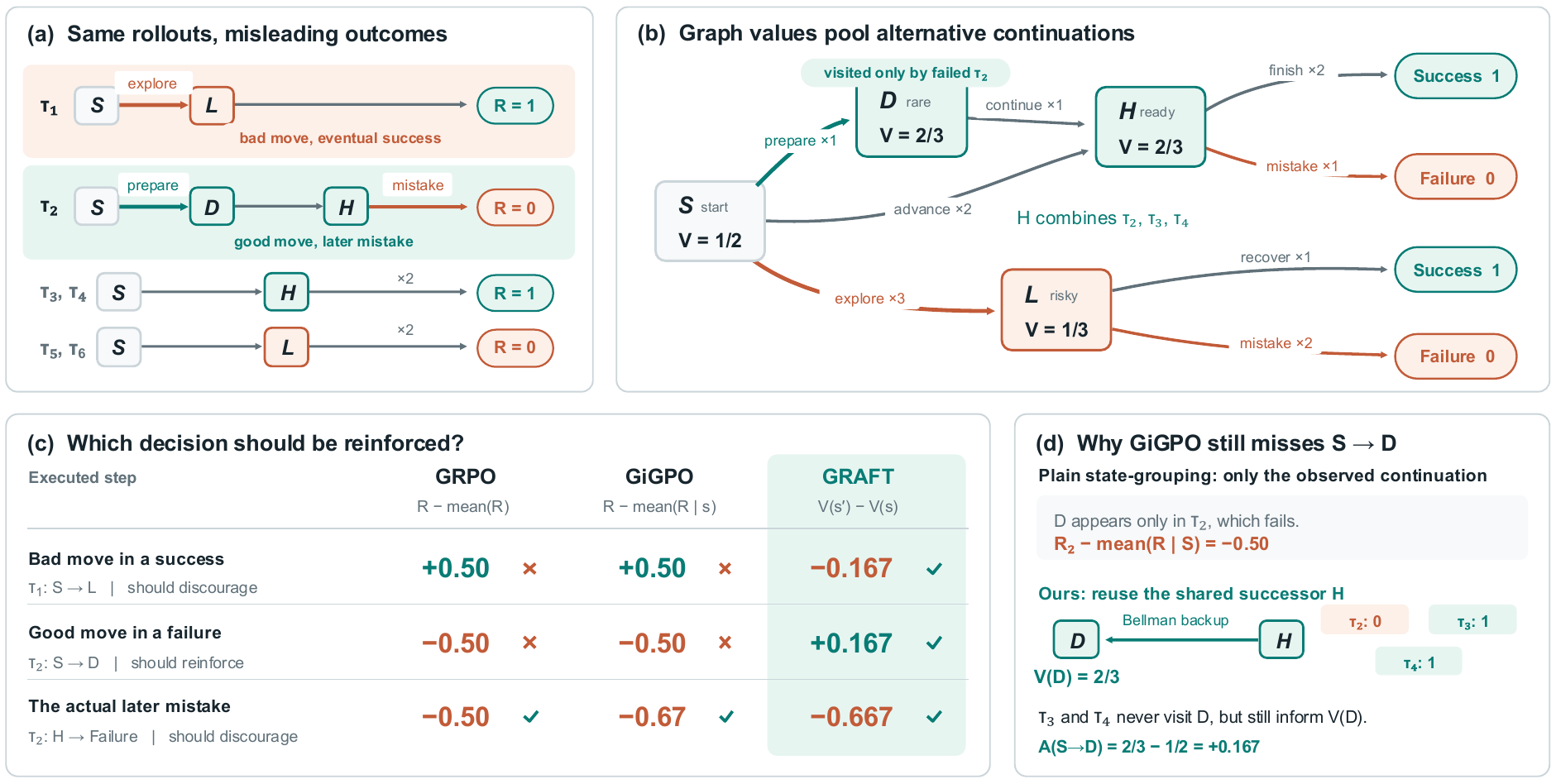}
    \caption{Intuitive comparison of advantage estimation methods: trajectory-level advantage in GRPO, state-grouping-based advantage in GiGPO, and our proposed graph-bootstrapped advantage.}
    \vspace{-5pt}
    \label{fig:intuitive_comparison}
\end{wrapfigure}
To more intuitively illustrate our method's advantages, in Fig.\ref{fig:intuitive_comparison}, we compare our graph-based advantage estimation with the trajectory-level advantage in GRPO and the state-grouping-based approach in GiGPO. By bootstrapping (Bellman backups) over the graph, our method estimates state values more accurately, thus effectively identifying valuable steps within failed trajectories (\emph{e.g.}, $S \rightarrow D$ in $T_2$) and erroneous steps within successful ones (\emph{e.g.}, $S \rightarrow L$ in $T_1$). In contrast, advantage estimation in GRPO and GiGPO is heavily influenced by terminal outcomes (\emph{i.e.}, both $T_1$'s $S \rightarrow L$ and $T_2$'s $S \rightarrow D$ are affected by the outcome of the belonging trajectory, resulting wrong advantages). Thus, they fail to properly credit beneficial actions in failed rollouts or penalize detrimental actions in successful ones. In Appendix \ref{sec:case_study}, we also provide case studies to demonstrate the advantages of our method.

However, since this single-step TD residual $A_t$ relies solely on the state-value difference between two consecutive nodes, any bias in the state-value estimation directly propagates to the advantage estimation. To obtain a more robust advantage estimation, we further take the advantage of Generalized Advantage Estimation (GAE), which is the de facto in RL that can reduce the impact of state-value estimation bias by computing a weighted sum of the subsequent multi-step advantage estimates. Specifically, we propose Graph GAE, which extends GAE to the trajectory graph. For each executed transition $(s_t, a_t, s_{t+1})$, we define the Graph GAE advantage as: 
\begin{equation}
\label{eq:graph_gae}
    A_t^{\text{GAE}} = \underbrace{\gamma V(\phi(s_{t+1})) - V(\phi(s_t))}_{A_t} + \sum_{k=1}^{T-t} (\gamma\lambda)^k \cdot \bar{A}_{t+k}
\end{equation}
where $\bar{A}_{t+k}$ is the mean $k$-hop TD residual, defined as the average single-step TD residual over all raw edge instances $(u', a', v')$ reachable from $a_t$ in exactly $k$ hops on the graph $\mathcal{G}$:
\begin{equation}
    \bar{A}_{t+k} = \frac{1}{|\mathcal{E}_k(a_t)|} \sum_{(u', a', v') \in \mathcal{E}_k(a_t)} \bigl[\gamma V(v') - V(u')\bigr]
\end{equation}
where $\mathcal{E}_k(a_t)$ denotes the set of all directed edges reachable from $a_t$ in exactly $k$ hops. The parameter $\lambda \in [0,1]$ controls the bias-variance trade-off: when $\lambda = 0$, $A_t^{\text{GAE}}$ reduces to the one-step estimate $A_t$ with low variance but high bootstrap bias. As $\lambda$ increases, the effective horizon of the estimator expands, thereby reducing the reliance on intermediate state-value bootstrapping, this mitigates bootstrap bias at the cost of higher variance. Intuitively, if the edges reachable from $a_t$ have overall high advantages, this means that starting from $a_t$ will be easier to reach good states, thereby contributing to increase the advantage estimate for $a_t$; Conversely, if the reachable edges yield predominantly low advantages, the advantage of $a_t$ is correspondingly reduced. The characteristic of Graph GAE is that it smoothly interpolates between two extremes—the single-step advantage and the Monte Carlo advantage $\sum_{k=0}^{T-t}\gamma^kr(s_{t+k}) - V(s_t)$—via $\lambda$, achieving a better bias-variance trade-off.

\textbf{Advantage normalization.} Although $A_t^{GAE}$ itself can serve as a robust step-level advantage estimate, to further eliminate scale sensitivity, we additionally apply advantage normalization within each node's on-state group (please see the experiments in Tab.\ref{tab:ablation_studies}). For each node $u$, all outgoing edges $(u, a, v)$ form a natural step group corresponding to the set of actions sampled from the same state $s = \phi^{-1}(u)$. Within this group, we apply advantage normalization:
\begin{equation}
    \hat{A}_{\text{step}}(s_t, a_t) = \frac{A^{GAE}_t - \mu_{\mathcal{G}(s_t)}}{\sigma_{\mathcal{G}(s_t)} + \epsilon}
\end{equation}
where $\mu_{\mathcal{G}(s_t)}$ and $\sigma_{\mathcal{G}(s_t)}$ are the mean and standard deviation of step advantages within the step group $\mathcal{G}(s_t) = \{A_t^{GAE} : (s_t, \cdot, \cdot) \in \mathcal{E}\}$. For groups containing only one edge, to avoid yielding a normalized advantage of zero, we perform normalization by setting the mean to 0 and using the absolute value as the standard deviation. The whole GRAFT algorithm is in Alg.~\ref{alg:graft} in Appendix \ref{sec:overall_alg}.

\vspace{-5pt}
\section{Experiments}
\subsection{Experimental Setup}

\begin{table*}[t]
\centering
\caption{Performance on ALFWorld and WebShop. Results are averaged over 3 random seeds. For ALFWorld, we report the average success rate (\%) for each subtask as well as the overall result. For WebShop, we report both the average score and the average success rate (\%). Best results are \textbf{bolded}.}
\label{tab:main_results}
\resizebox{\linewidth}{!}{
\begin{tabular}{ll ccccccc cc}
\toprule
\multirow{2}{*}{\textbf{Type}} & \multirow{2}{*}{\textbf{Method}}
  & \multicolumn{7}{c}{\textbf{ALFWorld}}
  & \multicolumn{2}{c}{\textbf{WebShop}} \\
\cmidrule(lr){3-9} \cmidrule(lr){10-11}
 & & \textbf{Pick} & \textbf{Look} & \textbf{Clean} & \textbf{Heat} & \textbf{Cool} & \textbf{Pick2} & \textbf{All}
   & \textbf{Score} & \textbf{Succ.} \\
\midrule
\multicolumn{11}{c}{\textit{Qwen2.5-1.5B-Instruct}} \\
\midrule
Prompting   & Qwen2.5            &  5.9 &  5.5 &  3.3 &  9.7 &  4.2 &  0.0 &  4.1 & 23.1 &  5.2 \\
Prompting   & ReAct              & 17.4 & 20.5 & 15.7 &  6.2 &  7.7 &  2.0 & 12.8 & 40.1 & 11.3 \\
Prompting   & Reflexion          & 35.3 & 22.2 & 21.7 & 13.6 & 19.4 &  3.7 & 21.8 & 55.8 & 21.9 \\
RL Training & PPO  & 64.8$\pm$3.5 & 40.5$\pm$6.9 & 57.1$\pm$4.9 &  60.6$\pm$6.6 & 46.4$\pm$4.0 & 47.4$\pm$1.9 & 54.4$\pm$3.1 & 73.8$\pm$3.0 & 51.5$\pm$2.9 \\

RL Training & RLOO               & 88.3$\pm$3.0 & 52.8$\pm$8.6 & 71.0$\pm$5.9 & 62.8$\pm$8.7 & 66.4$\pm$5.5 & 56.9$\pm$4.7 & 69.7$\pm$2.5 & 73.9$\pm$5.6 & 52.1$\pm$6.7 \\

RL Training & GRPO               & 85.3$\pm$1.5 & 53.7$\pm$8.0 & 84.5$\pm$6.8 & 78.2$\pm$7.9 & 59.7$\pm$5.0 & 53.5$\pm$5.6 & 72.8$\pm$3.6 & 75.8$\pm$3.5 & 56.8$\pm$3.8 \\

RL Training & GiGPO              & 94.4$\pm$5.9 & 67.5$\pm$4.6 & \underline{94.8}$\pm$3.8 & 94.4$\pm$7.8 & 79.8$\pm$4.7 & 76.4$\pm$5.4 & 86.7$\pm$1.7 & 83.1$\pm$1.6 & 65.0$\pm$3.2 \\


RL Training & SALT & \underline{96.2}$\pm$1.7 & 65.2$\pm$10.8 & 93.1$\pm$4.7 & 81.8$\pm$8.3 & 85.0$\pm$6.9 & 77.0$\pm$4.7 & 85.2$\pm$2.5 & 86.9$\pm$0.6 & 74.7$\pm$2.4 \\

RL Training & GraphGPO              & 95.2$\pm$1.6 & \underline{85.7}$\pm$5.8 & \textbf{100.0}$\pm$0.0 & \underline{96.3}$\pm$2.6 & \underline{85.3}$\pm$2.6 & \underline{93.7}$\pm$2.2 & \underline{92.7}$\pm$1.3 & \underline{89.3}$\pm$1.5 & \underline{78.7}$\pm$3.9 \\
\cmidrule(lr){1-11}

\rowcolor{paleblue} RL Training & \textbf{GRAFT (Ours)} & \textbf{99.2}$\pm$1.2 & \textbf{98.3}$\pm$2.4 & \textbf{100.0}$\pm$0.0 & \textbf{100.0}$\pm$0.0 & \textbf{93.3}$\pm$3.7 & \textbf{97.6}$\pm$3.3 & \textbf{97.4}$\pm$0.4 & \textbf{90.8}$\pm$0.4 & \textbf{82.3}$\pm$1.0 \\
\midrule
\multicolumn{11}{c}{\textit{Qwen2.5-7B-Instruct}} \\
\midrule
Prompting   & Qwen2.5            & 33.4 & 21.6 & 19.3 &  6.9 &  2.8 &  3.2 & 14.8 & 26.4 &  7.8 \\
Prompting   & ReAct              & 48.5 & 35.4 & 34.3 & 13.2 & 18.2 & 17.6 & 31.2 & 46.2 & 19.5 \\
Prompting   & Reflexion          & 62.0 & 41.6 & 44.9 & 30.9 & 36.3 & 23.8 & 42.7 & 58.1 & 28.8 \\
RL Training & PPO  & 92.3$\pm$4.0 & 64.0$\pm$8.4 & 92.5$\pm$2.4 & 89.5$\pm$7.0 & 80.3$\pm$2.0 & 68.8$\pm$8.3 & 80.4$\pm$2.7 & 81.4$\pm$3.1 & 68.7$\pm$5.1 \\
RL Training & RLOO               & 87.6$\pm$4.3 & 78.2$\pm$8.3 & 87.3$\pm$5.8 & 81.3$\pm$7.6 & 71.9$\pm$5.2 & 48.9$\pm$8.4 & 75.5$\pm$4.6 & 80.3$\pm$3.2 & 65.7$\pm$4.0 \\

RL Training & GRPO               & 90.8$\pm$5.1 & 66.1$\pm$6.7 & 89.3$\pm$5.4 & 74.7$\pm$6.9 & 72.5$\pm$5.4 & 64.7$\pm$7.3 & 77.6$\pm$5.2 & 79.3$\pm$2.8 & 66.1$\pm$3.7 \\

RL Training & GiGPO              & 97.7$\pm$1.6 & 82.7$\pm$7.9 & 98.8$\pm$1.6 & 83.7$\pm$7.2 & 89.3$\pm$8.2 & 79.2$\pm$6.6 & 90.8$\pm$1.3 & 84.4$\pm$2.9 & 72.8$\pm$3.2 \\


RL Training & SALT & 93.4$\pm$2.1 & 72.6$\pm$7.5 & 91.5$\pm$3.2 & 90.3$\pm$3.5 & 78.1$\pm$2.7 & 76.4$\pm$4.4 & 87.3$\pm$4.4 & 83.1$\pm$3.8 & 75.2$\pm$5.5 \\

RL Training & GraphGPO              & \textbf{100.0}$\pm$0.0 & \underline{92.9}$\pm$5.8 & \textbf{100.0}$\pm$0.0 & \underline{94.4}$\pm$0.0 & \underline{91.4}$\pm$1.5 & \underline{92.1}$\pm$2.2 & \underline{95.3}$\pm$1.1 & \underline{86.9}$\pm$0.7 & \underline{80.3}$\pm$1.3 \\
\cmidrule(lr){1-11}

\rowcolor{paleblue} RL Training & \textbf{GRAFT (Ours)} & \underline{99.0}$\pm$1.5 & \textbf{100.0}$\pm$0.0 & \underline{98.4}$\pm$2.3 & \textbf{100.0}$\pm$0.0 & \textbf{97.2}$\pm$2.9 & \textbf{96.8}$\pm$2.6 & \textbf{98.4}$\pm$0.6 & \textbf{91.5}$\pm$1.3 & \textbf{82.8}$\pm$0.6 \\
\bottomrule
\end{tabular}}
\end{table*}

\textbf{Benchmarks.} Following the agentic RL baseline GiGPO \citep{GiGPO}, we evaluate our method on a suite of challenging multi-turn agentic benchmarks, including ALFWorld \citep{Alfworld}, WebShop \citep{Webshop}, and SearchQA \citep{SearchQA}. The introduction of these benchmarks is deferred to Appendix \ref{sec:experiment_details}. In SearchQA, the training set is drawn from NQ and HotpotQA, making these two datasets for in-domain evaluation, while the remaining datasets are used to assess out-of-domain generalization. 

\textbf{Baselines.} To ensure a standardized and rigorous comparison, we adopt the baseline results directly from the original GiGPO \citep{GiGPO} and GraphGPO \citep{GraphGPO} papers. For ALFWorld and WebShop, the baselines include: (1) Prompting-based agents: ReAct \citep{react} and Reflexion \citep{reflexion}; and (2) RL training methods: PPO \citep{PPO}, a widely used critic-based RL algorithm, group-based RL methods including RLOO \citep{rloo}, GRPO \citep{GRPO}, and GiGPO \citep{GiGPO}, as well as graph-based policy optimization methods, SALT \citep{SALT}, GraphGPO \citep{GraphGPO} (In appendix \ref{sec:discuss_with_graphgpo}, \ref{sec:discuss_with_salt}, we provide detailed discussions about the differences between our method and these two methods). For searchQA tasks, we also follow the experimental protocol in GiGPO \citep{GiGPO} and compare GRAFT against a specific suite of baselines including R1-Instruct, Search-R1 \citep{SearchQA}, ZeroSearch \citep{zerosearch}, StepSearch \citep{stepsearch} and the agentic RL methods: GiGPO, GraphGPO. 

\textbf{Implementation Details.} To ensure a direct and fair comparison with the prior methods, we utilize Qwen2.5-Instruct series (1.5B, 3B, 7B) as our base models (the compared methods report results based on Qwen2.5-Instruct series). For all benchmarks, we follow GiGPO's training and evaluation configurations as our basic configurations, including rollout batch size, group size, mini-batch size, learning rate, and sampling temperature, etc. Additionally, the specific hyperparameters of our method are: the value discount factor $\gamma = 0.99$, the advantage weighting factor $\lambda = 0.95$ for ALFWorld, $0.8$ for WebShop, and $0.6$ for SearchQA. Further implementation details are provided in Appendix \ref{sec:experiment_details}.

\subsection{Main Results}

As shown in Tab.\ref{tab:main_results}, GRAFT achieves significant gains over the trajectory-level baseline GRPO and demonstrates superior performance compared to the state-of-the-art GiGPO and GraphGPO across both ALFWorld and WebShop. On ALFWorld, GRAFT achieves improvements on nearly all subtasks, resulting in average success rate gains of 24.6\% and 20.8\% over GRPO for the 1.5B and 7B models, respectively. On WebShop, GRAFT not only attains higher task scores but also improves the average success rate over GRPO by 25.5\% and 16.7\% for the 1.5B and 7B models, respectively. These results highlight that our method effectively overcomes the limitations of coarse-grained trajectory-level advantages. In addition, GRAFT also surpasses both step-group-based GiGPO and graph-based GraphGPO by 10.7\%/17.3\% and 4.7\%/3.6\% on ALFWorld and WebShop, respectively, both of which are step-level credit assignment algorithms. As the advantage estimation in our method closely revolves around the standard definition, compared to heuristic advantage estimation approaches in GiGPO, SALT, and GraphGPO, our method can provide more credible step-level advantage estimates.

Tab.\ref{tab:search_results} presents the results on searchQA tasks. We observe that GRAFT achieves strong and consistent gains across both single-hop and multi-hop reasoning datasets. Notably, GRAFT reaches an average success rate of 48.6\% at 7B, outperforming prior baselines such as Search-R1 and StepSearch, and also outperforms the strong step-level credit assignment methods, GiGPO and GraphGPO. In Appendix \ref{sec:additional_results}, to demonstrate the generalization of our method, we further provide results on the Qwen3 model series.

\begin{table*}[ht]
\centering
\caption{Performance on searchQA tasks. $\dagger$ and $\star$ indicate in-domain and out-of-domain datasets, respectively. \textbf{Bold} indicates the best performance in each category.}
\label{tab:search_results}
\resizebox{\linewidth}{!}{
\begin{tabular}{ll ccc cccc c}
\toprule
\multirow{2}{*}{\textbf{Type}} & \multirow{2}{*}{\textbf{Method}}
  & \multicolumn{3}{c}{\textbf{Single-Hop QA}}
  & \multicolumn{4}{c}{\textbf{Multi-Hop QA}}
  & \multirow{2}{*}{\textbf{Avg.}} \\
\cmidrule(lr){3-5} \cmidrule(lr){6-9}
 & & \textbf{NQ}$\dagger$ & \textbf{TriviaQA}$\star$ & \textbf{PopQA}$\star$
   & \textbf{HotpotQA}$\dagger$ & \textbf{2Wiki}$\star$ & \textbf{MuSiQue}$\star$ & \textbf{Bamboogle}$\star$ & \\
\midrule
\multicolumn{10}{c}{\textit{Qwen2.5-3B-Instruct}} \\
\midrule
RL Training & R1-Instruct          & 27.0          & 53.7          & 19.9          & 23.7          & 29.2          &  7.2          & 29.3          & 27.1 \\
RL Training & Search-R1            & 34.1          & 54.5          & 37.8          & 32.4          & 31.9          & 10.3          & 26.4          & 32.5 \\
RL Training & ZeroSearch           & 41.4          & 57.4          & 44.8 & 27.4       & 30.0          &  9.8          & 11.1          & 31.7 \\
RL Training & StepSearch           & --            & --            & --            & 34.5          & 32.0          & \textbf{17.4} & --            & 34.4 \\
RL Training & GiGPO                & 42.0 & 59.5 & 42.4      & \underline{36.9} & 37.0 & \underline{12.6}       & \underline{64.1} & 42.1 \\


RL Training & GraphGPO                & \underline{44.5} & \underline{59.7} & \underline{46.2}      & 36.2 & \underline{37.2} & 12.1       & 63.7 & \underline{44.0} \\
\cmidrule(lr){1-10}

\rowcolor{paleblue} RL Training & \textbf{GRAFT (Ours)} & \textbf{45.3} & \textbf{61.8} & \textbf{46.4} & \textbf{37.2} &  \textbf{37.3}         & 12.4 & \textbf{64.9} & \textbf{45.2} \\

\midrule

\multicolumn{10}{c}{\textit{Qwen2.5-7B-Instruct}} \\
\midrule
RL Training & R1-Instruct          & 21.0          & 44.9          & 17.1          & 20.8          & 27.5          &  6.0          & 19.2          & 22.4 \\
RL Training & Search-R1            & 39.3          & 61.0          & 39.7          & 37.0          & 40.1          & 14.6          & 36.8          & 38.5 \\
RL Training & ZeroSearch           & 43.6          & 61.8          & \textbf{51.5} & 34.6          & 35.2          & 18.4          & 27.8          & 39.1 \\
RL Training & StepSearch           & --            & --            & --            & 38.6          & 36.6          & \textbf{22.6} & --            & 40.0 \\
RL Training & GiGPO                & 46.4 & 64.7 & 46.1       & \underline{41.6} & \textbf{43.6} & \underline{18.9}       & 68.9 & 47.2 \\


 RL Training & GraphGPO                & \underline{46.8} & \underline{65.4} & 47.7       & 41.1 & 42.7 & 17.0       & \underline{69.0} & \underline{48.0} \\
\cmidrule(lr){1-10}

\rowcolor{paleblue} RL Training & \textbf{GRAFT (Ours)} & \textbf{46.9} & \textbf{65.6} & \underline{47.8} & \underline{41.9} & \underline{43.4} &  17.7   & \textbf{70.2} & \textbf{48.6} \\
\bottomrule
\end{tabular}}
\end{table*}

\subsection{Ablation Studies \& Further Analysis}

Tab.~\ref{tab:ablation_studies} isolates the effects of the key components of our GRAFT. We start with a baseline GRAFT$^\dagger$ that uses the step-level advantage defined in Eq.(\ref{eq:step_advantage}). Removing advantage normalization suffers a substantial performance drop across both benchmarks (\emph{e.g.}, ALFWorld's ``All'' declines from 96.10 to 71.33, and WebShop success rate falls to 72.93), demonstrating that normalization is critical for stabilizing gradient updates and preventing scale-induced optimization instability. When we replace the original GRPO objective with the GRPO-C objective (Eq.(\ref{eq:grpo_c})), performance surpasses the baseline across nearly all metrics. This confirms that aligning the optimization granularity with the advantage estimator (treating the entire step as a single action to match the step-level advantage) is superior to GRPO's mismatched token-level assignment. Integrating Graph GAE further delivers consistent gains, analogous to the bias-variance trade-off in standard GAE, our Graph GAE effectively fuses multiple k-step estimators to produce a more faithful and robust advantage estimate than the single-step estimator, leading to improved policy optimization. The complete GRAFT framework achieves the highest performance, confirming that these components are not independent but can synergistically enhance policy learning in long-horizon agentic tasks.

\begin{table*}[ht]
\centering
\caption{Ablation study results. ``w/o adv normalization'' denotes directly using the step-level advantage defined in Eq.(\ref{eq:step_advantage}) without further advantage normalization. GRAFT$^\dagger$ is a variant of GRAFT without GRPO-C objective (Eq.(\ref{eq:grpo_c})) and Graph GAE, serving as the baseline for ablation studies. The results are based on Qwen2.5-1.5B-Instruct.}
\label{tab:ablation_studies}
\resizebox{\linewidth}{!}{
\begin{tabular}{l | ccccccc | cc}
\toprule
\multirow{2}{*}{\textbf{Method}}
  & \multicolumn{7}{c|}{\textbf{ALFWorld}}
  & \multicolumn{2}{c}{\textbf{Webshop}} \\
 & Pick & Look & Clean & Heat & Cool & Pick2 & All & Score & Succ. \\
\midrule
GRAFT$^\dagger$
  & 99.23$\pm$1.08
  & 100.0$\pm$0.00
  & 96.73$\pm$2.31
  & 97.23$\pm$3.15
  & 92.57$\pm$5.35
  & 89.60$\pm$2.46
  & 96.10$\pm$1.13
  & 89.53$\pm$0.76
  & 80.67$\pm$1.68\\
w/o adv normalization
  & 75.37$\pm$1.51
  & 75.47$\pm$3.53
  & 72.50$\pm$4.76
  & 63.10$\pm$3.53
  & 72.20$\pm$5.67
  & 60.90$\pm$4.54
  & 71.33$\pm$2.87
  & 85.67$\pm$0.74
  & 72.93$\pm$1.46\\
\midrule
w/ GRPO-C objective (\ref{eq:grpo_c})
  & 100.0$\pm$0.00
  & 98.03$\pm$2.78
  & 96.00$\pm$3.39
  & 100.0$\pm$0.00
  & 96.20$\pm$2.53
  & 90.73$\pm$3.78
  & 96.87$\pm$1.08
  & 90.63$\pm$1.51
  & 81.30$\pm$1.31\\
w/ Graph GAE (\ref{eq:graph_gae})
  & 97.50$\pm$2.18
  & 100.0$\pm$0.00
  & 100.0$\pm$0.00
  & 100.0$\pm$0.00
  & 97.10$\pm$2.11
  & 91.53$\pm$4.13
  & 97.17$\pm$0.75
  & 90.23$\pm$0.70
  & 81.73$\pm$0.75\\
GRAFT
  & 99.17$\pm$1.17
  & 98.33$\pm$2.35
  & 100.0$\pm$0.00
  & 100.0$\pm$0.00
  & 93.33$\pm$3.71
  & 97.63$\pm$3.34
  & 97.43$\pm$0.38
  & 90.83$\pm$0.37
  & 82.27$\pm$0.99\\
\bottomrule
\end{tabular}}
\end{table*}


\begin{wrapfigure}{r}{0.4\columnwidth} 
    \centering
    \vspace{-10pt}
    \includegraphics[width=\linewidth]{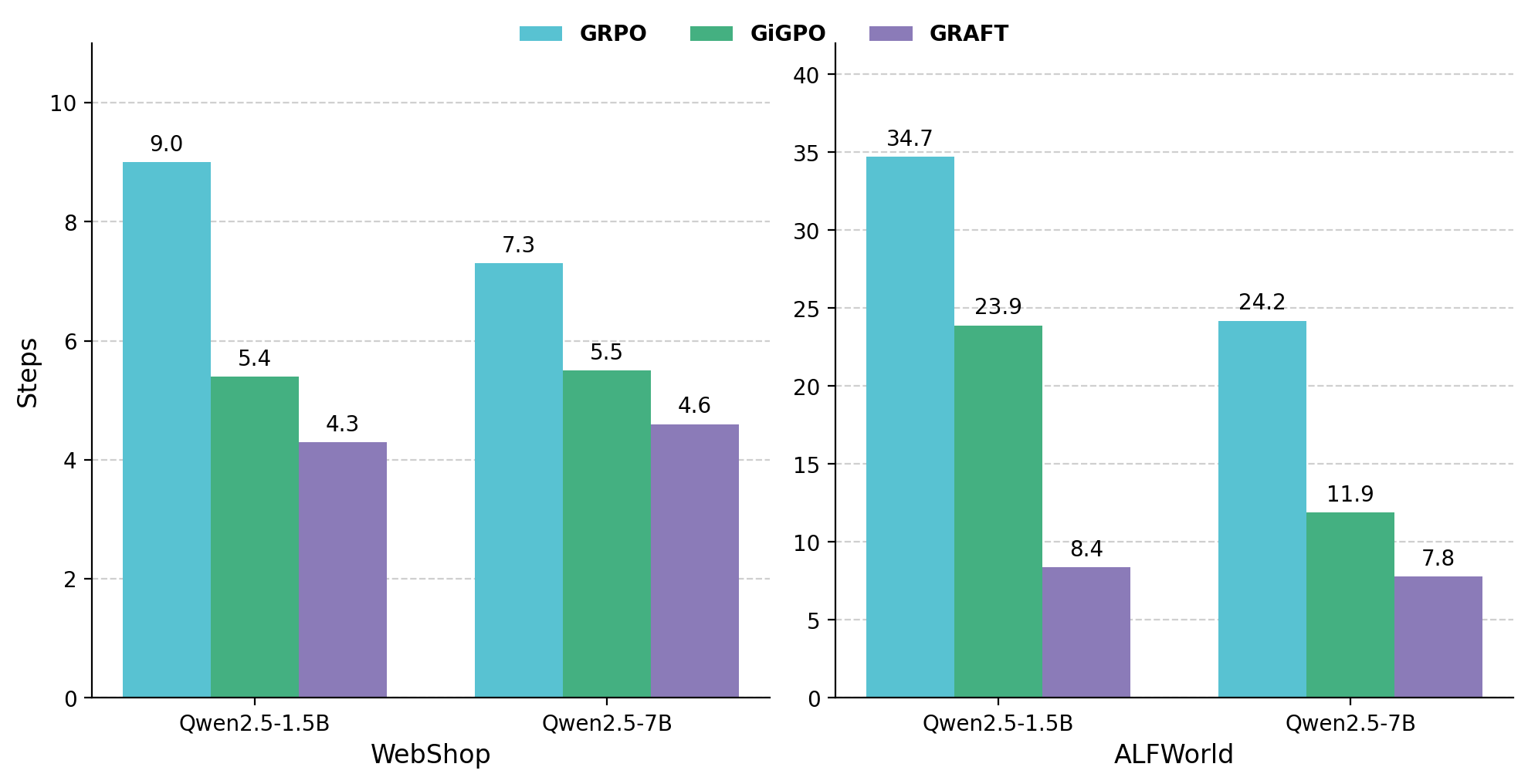}
    \caption{The average number of execution turns required to finish tasks.}
    \vspace{-8pt}
    \label{fig:inference-step}
\end{wrapfigure}
\textbf{Task Efficiency Analysis}. In Fig.\ref{fig:inference-step}, we further illustrate the average number of interaction turns required by the trained models to complete tasks on WebShop and ALFWorld. Across both benchmarks, GRAFT-trained models consistently require fewer turns than those trained with GRPO or GiGPO. This indicates that our method can yield more faithful step-level advantage estimates, which enable the agent to avoid redundant actions and make more efficient decisions. Fewer interaction turns inherently translate to lower model inference costs and reduced environment API calls. Together with Tab.\ref{tab:main_results}, these results demonstrate that our GRAFT achieves substantial advantages over existing methods in both task accuracy and cost efficiency.

In appendix \ref{sec:additional_experiments}, we further provide additional experimental analysis, including training dynamics, hyperparameter experiments, method mechanism analysis, additional results, and case studies.

\begin{wrapfigure}{r}{0.4\columnwidth} 
    \centering
    \vspace{-10pt}
    \includegraphics[width=\linewidth]{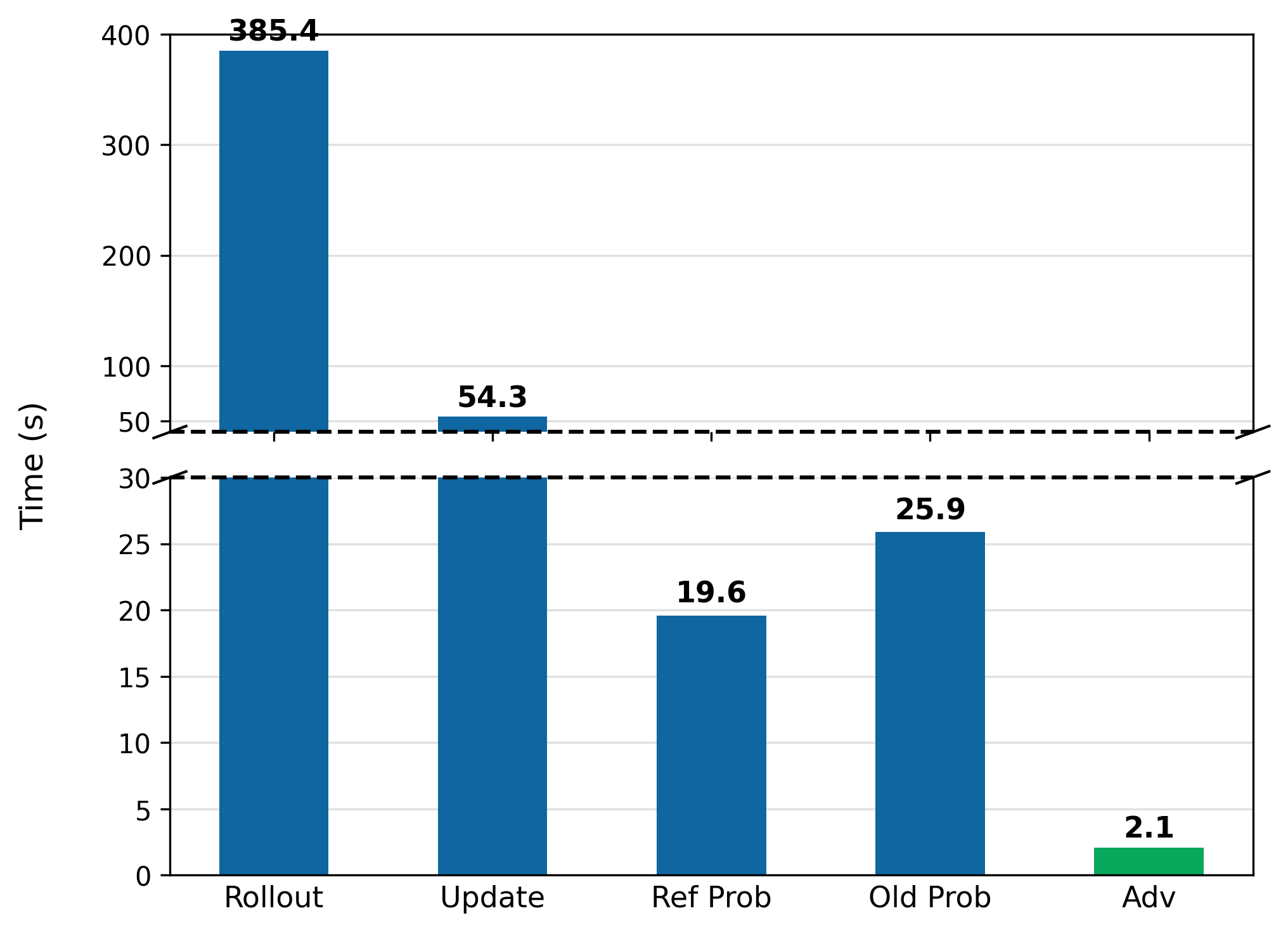}
    \caption{Breakdown of time consumption per training step.}
    \vspace{-5pt}
    \label{fig:time-breakdown}
\end{wrapfigure}
\textbf{Computional Cost.} Our method incurs no additional GPU memory overhead, as we have not introduced any extra models or modules. Due to inconvenience in comparing FLOPs between advantage estimation and model computation, we analyze the time consumption per training step. We use Qwen2.5-1.5B-Instruct as the base model and train it on the ALFWorld benchmark. The per-step runtime breakdown is shown in Fig.\ref{fig:time-breakdown}. As illustrated, a typical training step involves a rollout stage, advantage estimation, probability computation for sampled responses under both the reference and old policies, and policy update. Our method operates specifically within the advantage estimation stage. The additional costs introduced by GRAFT stem from graph construction and graph-based advantage estimation. These costs are negligible compared to rollout and policy update, occupying merely 0.43\% of the total RL training time and introducing virtually no extra computational burden.


\section{Related Work}
\label{sec:related_work}

\textbf{Reinforcement Learning for LLMs}. RL for LLMs has evolved from preference alignment to reasoning enhancement. Early PPO-based RLHF methods \citep{PPO, rlhf, InstructGPT}, while effective, were computationally burdensome due to the need for separate reward and value models, prompting a shift toward alternatives like DPO \citep{DPO} and its variants (e.g., SimPO \citep{SimPO}, KTO \citep{KTO}, ORPO \citep{ORPO}). However, the offline characteristics of the DPO series result in inherently performance-bounded compared to online methods. The recent success of DeepSeek-R1 \citep{DeepSeek-R1} established RL with Verifiable Rewards (RLVR) as a new paradigm, demonstrating that strong reasoning can emerge via online outcome-based algorithms (\emph{e.g.}, GRPO \citep{GRPO}, DAPO \citep{DAPO}, GSPO \citep{GSPO}) without SFT cold-start or process supervision. Although such group-based methods enable scalable, memory-efficient training by eliminating critic models, they predominantly rely on sparse outcome rewards and suffer from the credit assignment problem at the step level. While Process Reward Models \citep{prime, prm2, agentprm} offer finer-grained guidance, they incur prohibitive annotation costs and are prone to reward hacking. Consequently, recent efforts \citep{GiGPO, HCAPO, GraphGPO} have pivoted on deriving dense, multi-granularity advantages directly from sampling statistics or state transitions, aiming to achieve precise credit assignment without the overhead of learned critics or expensive process annotations.

\textbf{Agentic Reinforcement Learning}. RL has become a cornerstone for empowering LLM agents in dynamic, open-ended environments \citep{agenticrl, stepsearch, Tongyi-DeepSearch}, evolving from early value-based methods \citep{AWR, value-based-rl} to modern policy gradient approaches for complex tasks like web navigation \citep{web-navgation}, tool use \citep{tool-use}, and software engineering \citep{swe}. Recent works such as Search-R1 \citep{SearchQA} and WebSailor \citep{websailor} demonstrate that agentic RL can effectively instill multi-step information-seeking and long-horizon planning capabilities without human-curated supervision. Despite these advances, training efficacy remains limited by sparse outcome-based rewards, which hinder the agent’s ability to correct intricate intermediate errors. To address this credit assignment problem, Process Reward Models offer step-level supervision but incur prohibitive annotation costs. Alternatively, intrinsic reward mechanisms such as EMPG \citep{EMPG} utilize dynamic entropy to encourage exploration. More recently, GiGPO \citep{GiGPO} introduces state-based anchoring to aggregate all states into step-wise groups for normalized advantage estimation in the aggregated groups. HCAPO \citep{HCAPO} proposes to leverage the LLM itself as a post-hoc critic to refine step-level credit through hindsight reasoning. A recent work, GraphGPO \citep{GraphGPO}, employs the graph structure to model the state-action transition relationship and achieves step-level credit assignment via heuristic reward shaping. Although both use the graph structure, our method differs fundamentally from GraphGPO in motivation, implementation, and effectiveness; we provide a detailed comparative discussion in Appendix \ref{sec:discuss_with_graphgpo}.

\section{Conclusion}

In this work, we revisit the basic advantage estimation problem and group-based RL, and identify a critical gap: GRPO-style advantage estimation is reliable at the response level, but systematically biased when extending the group-based estimation strategy to the step-level. To address this, we propose GRAFT, a graph-based credit-assignment framework that grafts all rollout trajectories into a unified trajectory graph, recovers node state-values via Bellman iteration on the graph, and assigns credit to each edge by the node value difference. The resulting step-level advantage is not a heuristic surrogate but faithfully adheres to the basic advantage definition in RL, and asymptotically converges to the true target as the graph reaches sufficiently saturated. Across diverse multi-turn agentic benchmarks, GRAFT consistently outperforms GRPO and demonstrates superior performance compared to recent SOTA agentic RL algorithms, while remaining critic-free and PRM-free.

\section*{Acknowledgments}

This work was completed during the internship at Tencent AI Platform Department. We gratefully acknowledge Tencent Inc. for providing computing resources that greatly supported the completion of this work. This work was also supported in part by the National Natural Science Fund of China (No.62371295), the Shanghai Jiao Tong University AI for Engineering Initiative (No.WH410263001/001), and the Science and Technology Commission of Shanghai Municipality
(No.22DZ2229005).

\bibliography{iclr2027_conference}
\bibliographystyle{iclr2027_conference}

\clearpage
\appendix
\section*{Appendix}

\section{More Discussions}

\subsection{Advantages over GiGPO}
\label{sec:discuss_with_gigpo}

As stated in Sec.\ref{sec:method}, GiGPO \citep{GiGPO} has proposed to estimate step-level advantages by grouping steps that share the same state and normalizing their discounted Monte Carlo returns $R_t = \sum_{l \geq 0} \gamma^l r_{t+l}$ within each group. Since our method relies on aggregating cross-trajectory steps into graph nodes, we discuss the advantages of our method over the plain state-grouped advantage estimation approach in GiGPO. 

The design in GiGPO has a fundamental limitation: the value signal for state $s$ is derived only from trajectories that pass through $s$, making the estimate high-variance and sample-inefficient, especially for states visited infrequently. Our method addresses this by constructing an empirical transition graph $\mathcal{G}$ over the rollout group and estimating $V(s)$ via Bellman value iteration (see Sec.\ref{sec:method}). The step advantage for each executed edge $(s \xrightarrow{a} s')$ is then defined as $A(s, a, s') = \gamma\, V(s') - V(s)$, which exactly aligns with the standard advantage function in RL. This formulation offers two key advantages over GiGPO's vanilla state-grouping approach:

1. Broader information aggregation. Because $V(s)$ is computed by backward propagation through the graph, it implicitly aggregates reward signals from all trajectories reachable from the successors of $s$, rather than only those passing through $s$ itself. Concretely, if $s$ branches into $s'_1$ and $s'_2$, and these successors are collectively visited by $K$ downstream trajectories, then $V(s)$ incorporates rewards from all $K$ trajectories, whereas GiGPO's all $R_t$ for $s$ are confined to the trajectories that directly visit $s$. In Appendix.\ref{sec:information_degree}, we compare the information propagation degree (the number of distinct trajectories contributing reward information to the value estimation of a node (or state)) between our method and GiGPO. The results show that our method consistently achieves a higher information propagation degree than GiGPO throughout training.

 2. Theoretical soundness. The step advantage $\gamma V(s') - V(s)$ is precisely aligned with the standard definition of the advantage function $A(s, a) = Q(s, a) - V(s)$, since under a deterministic transition $Q(s,a) = \gamma V(s')$. This theoretical grounding ensures that the step-level signal faithfully measures how much better action $a$ is relative to the expected value of state $s$, whereas GiGPO's Monte Carlo return $R_t$ introduces high variance from stochastic future trajectories.

 3. GiGPO can be viewed as a special case of our method in which each trajectory is treated as an independent graph, and Bellman iteration is performed separately on these isolated graphs to estimate node values. Subsequently, step-level groups are constructed across trajectories, and values are normalized within each group to obtain step-level advantages. Compared to graph-based value estimation, single-trajectory value estimation is inherently less accurate due to the stochasticity of individual trajectories, resulting in higher variance.

Together, these properties make our graph-based estimator both more sample-efficient and more theoretically grounded than GiGPO's state-grouping baseline. As for performance, the results in Tab.\ref{tab:main_results} and \ref{tab:search_results} show that our method can achieve significantly better results than GiGPO, validating the superiority of our method over GiGPO in estimating step-level advantages.

\subsection{Discussion with GraphGPO}
\label{sec:discuss_with_graphgpo}

In GraphGPO \citep{GraphGPO}, the authors also utilized the graph structure to model state-action transitions across trajectories, which coincides with the idea in our work. Therefore, it is necessary to clarify the distinctions between our work and GraphGPO. Although both methods use the graph structure, our method differs fundamentally from GraphGPO in motivation, implementation, and performance. We elaborate on these differences below:

1. In terms of motivation, GraphGPO is driven by empirical observations: it finds that 65\% of steps in successful trajectories do not meaningfully advance the task, while 22\% of steps in failed trajectories contribute to task progress. Arguing that trajectory-level credit attribution is too coarse, GraphGPO aggregates all rollout trajectories into a state transition graph to leverage global connectivity for finer-grained credit assignment. In contrast, our motivation stems from the preliminary analysis in Sec.\ref{sec:preliminary}, which demonstrates that extending GRPO-style advantage estimation to the step level introduces systematic bias. To obtain theoretically grounded estimates while not incurring prohibitive Monte Carlo sampling costs, we should aggregate similar states across trajectories to better leverage global information. To this end, the graph structure serves as a natural representation for modeling the cross-trajectory state-action transition relationship.

2. In terms of implementation, GraphGPO adopts a heuristic approach that defines the step-level rewards based on the shortest-path distance from each node to any successful nodes $r(s,a,s^\prime) = r_{succ}w^{d(s)}$, where $d(s)$ is the shortest distance from $s$ to any successful states and $w \in (0, 1)$ is a distance discount factor. Then, the rewards are normalized within each node group to obtain step-level advantages. This heuristic reward assignment approach is based on the assumption that beneficial actions can reduce the distance to the task goal. In contrast, our method is grounded in fundamental RL theory. We treat the constructed graph as an empirical Markov Decision Process (MDP) and apply Bellman iteration ($V(u) = \sum_{a \in \mathcal{A}(u)}\hat{\pi}(a \mid u) \sum_{v \in \mathcal{V}} \hat{P}(v \mid u, a) \cdot \gamma \cdot V(v)$) to estimate the state-value of each node. The step-level advantage between two nodes is then defined as $A_t = \gamma \cdot V(u) - V(v)$. Theoretically, the estimated node values conform to the standard definition of state-value in RL (see Eq.(\ref{eq:theory_connection})), and the derived step-level advantages coincide exactly with the standard advantage function (see Sec.\ref{sec:step_advantage}). Rather than relying on hand-crafted heuristics, our approach is principled and theory-driven. 

3. In GraphGPO, after obtaining the step-level advantages, these values are uniformly allocated to individual tokens within each step and the GRPO optimization objective is applied. However, as our analysis in Sec.\ref{sec:preliminary} demonstrates, this token-level allocation introduces systematic bias. To this end, we employ the calibrated GRPO optimization objective defined in Eq.(\ref{eq:grpo_c}) in our method, which can further outperform the original objective formulation as shown in Tab.\ref{tab:ablation_studies}.

4. As for method performance, Tab.\ref{tab:main_results} and \ref{tab:search_results} both demonstrate that our method consistently outperforms GraphGPO across multiple multi-turn agentic benchmarks, even though both methods are based on graph representation. This indicates that our advantage estimation method, grounded in standard RL definitions, yields more accurate step-level advantages than GraphGPO’s heuristic approach. Furthermore, we provide theoretical guarantees showing that as the graph becomes saturated, the estimated step-level advantages in our method converge to the true advantages.

\subsection{Discussion with SALT}
\label{sec:discuss_with_salt}

In SALT \citep{SALT}, the authors also claimed to propose a step-level advantage assignment method based on the trajectory graph. However, their approach underutilizes the graph structure, which actually can be reduced to the plain state aggregation method akin to GiGPO. Specifically, SALT first computes trajectory-level advantages via GRPO, then identifies identical transitions $(s,a,s')$ across trajectories into a group and averages their inherited advantages (\emph{i.e.}, the trajectory-level advantages) within the group. This mechanism mirrors the state-based grouping in GiGPO, but differently SALT groups actions (along with their adjacent states) and assigns the group-averaged advantage to each action. Consequently, SALT inherits GiGPO’s fundamental limitations as discussed in Appendix \ref{sec:discuss_with_gigpo}: information propagation also remains limited. That is, advantage averaging within each group solely utilizes trajectory-level signals from the corresponding trajectories containing the specific action. Although this heuristic advantage adjustment method can mitigate trajectory-level noise (\emph{e.g.}, if a good action receives a misleading trajectory-level advantage due to subsequent failures, averaging across multiple trajectories may mitigate the credit misalignment issue), it cannot adjust advantages for actions that appear only once across all trajectories. In contrast, our method applies Bellman iteration to estimate the state-value of each node on the graph, deriving step-level advantages from the value differences between two nodes. This formulation does not depend on the occurrences number of the action, and even for actions appearing only once, our method can take advantage of bootstrapping on the graph to achieve a good step-level advantage estimation.

As for performance, the results in
Tab.\ref{tab:main_results} show that our method can achieve significantly better results than SALT, validating the
superiority of our method over SALT in estimating step-level advantages.

\subsection{Discussion with RTMC}
\label{sec:discuss_with_rtmc}

We further discuss with RTMC \citep{RTMC}, the authors proposed Rollout-Tree Monte Carlo advantage estimation, which aggregates return statistics from group rollouts over a tree structure to compute per-step Q-values and advantages, enabling fine-grained credit assignment. Specifically, for each step $t$, the discounted return is computed as $R_t = \sum_{l=0}^{T-t}\gamma^lr_{t+l}$. Then, for each state-action pair $(s, a)$, the Q-value is estimated by averaging returns across rollouts visiting $(s,a)$: $\widehat{Q}(s,a) = \frac{1}{N(s,a)}\sum_{i:(s_t^i,a_t^i)=(s,a)}R_t^i$, where $N(s,a)$ counts the number of rollouts that visit state $s$ and take action $a$. The state value for $s$ is estimated as $\widehat{V}(s) = \frac{1}{N(s)}\sum_{i:s_t^i=s}R_t^i$, where $N(s)=\sum_aN(s,a)$. The per-step advantage is finally derived as the difference between the action value and the state value: $\widehat{A}(s,a) = \widehat{Q}(s,a) - \widehat{V}(s)$.

Fundamentally, the advantage estimation in RTMC is equivalent to that in GiGPO. First,  both methods compute discounted returns for each step and then group steps that share the same state into a group and normalize the discounted returns within each group. While RTMC additionally averages the returns for each $(s,a)$ pair within the state-action group before state-group normalization, this does not introduce any essential difference with GiGPO's step-level advantage estimation. Consequently, the advantages of our method over GiGPO discussed in Appendix \ref{sec:discuss_with_gigpo} apply equally to RTMC.

\section{The Overall Algorithmic Procedure for GRAFT}
\label{sec:overall_alg}

We present the overall algorithmic procedure of our GRAFT in Algorithm \ref{alg:graft}. Our method utilizes graph as the basic representation structure to model state-action transition relationship. Its key characteristic is that it is grounded in reinforcement learning theory rather than heuristics: we perform Bellman iteration on the graph to estimate the state-value of each node, and then derive the step-level advantage from the difference in node values. Furthermore, we extend GAE on the graph to obtain more robust advantage estimation. Additionally, for loss computation, we identify the misalignment between the step-level advantage and the optimization objective in GRPO, thus we further adopt a calibrated optimization objective (Eq.(\ref{eq:grpo_c})) to ensure better alignment.

\begin{algorithm}[ht]
\caption{GRAFT: \textbf{GRA}ph-based \textbf{F}aithful S\textbf{T}ep-level Credit Assignment}
\label{alg:graft}
\begin{algorithmic}[1]
\REQUIRE Policy $\pi_\theta$, environment $\mathcal{E}$, dataset $\mathcal{D}$, group size $N$, discount factor $\gamma$, weighting factor $\lambda$
\FOR{each training iteration}
    \STATE Sample tasks $\{q_k\}$ from $\mathcal{D}$; collect $N$ rollouts per task under $\pi_\theta$ $\rightarrow$ batch $\mathcal{T}$
    \STATE \textbf{// Graph construction}
    \FOR{each task $q_k$}
        \STATE Build trajectory graph $\mathcal{G}_k$ from $\{(s_t, a_t, s_{t+1})\} \subset \mathcal{T}$ via state canonicalization $\phi$
        \STATE Identify sink nodes; assign terminal reward $R_i \in \{0,1\}$ 
    \ENDFOR
    \STATE \textbf{// State-value estimation}
    \FOR{each task $q_k$}
        \STATE Solve the Bellman equation (\ref{eq:bellman_equation}) on $\mathcal{G}_k$ via value iteration $\rightarrow \{V(u)\}$
    \ENDFOR
    \STATE \textbf{// Step-level advantage computation}
    \FOR{each step $(s_t, a_t, s_{t+1}) \in \mathcal{T}$}
        \STATE $A_t \leftarrow \gamma \cdot V(\phi(s_{t+1})) - V(\phi(s_t))$
        \STATE $A_t^{GAE} \leftarrow A_t + \sum_{k=1}^{T-t} (\gamma\lambda)^k \cdot \bar{A}_{t+k}, \quad \bar{A}_{t+k} = \frac{1}{|\mathcal{E}_k(a_t)|} \sum_{(u', a', v') \in \mathcal{E}_k(a_t)} \bigl[\gamma V(\phi(v')) - V(\phi(u'))\bigr]$
        \STATE $\hat{A}_{\text{step}}(s_t,a_t) \leftarrow \text{GroupNorm}(A_t^{GAE};\, \mathcal{G}(\phi(s_t)))$
    \ENDFOR
    \STATE \textbf{// Policy update}
    \STATE Update $\pi_\theta$ via the calibrated optimization objective in Eq.(\ref{eq:grpo_c}) on batch $\mathcal{T}$ using $\{\hat{A}_{i,t}\}$
\ENDFOR
\end{algorithmic}
\end{algorithm}

\section{Proof of Proposition 1}
\label{sec:appendix_proof}

\begin{proof}
We first formalize the distribution under which the advantage estimation error is evaluated. Let $d^\pi$ denote the state-action visitation distribution induced by policy $\pi$ over the time steps used for policy optimization. For a step-level credit signal $X$, its population estimation error is
\begin{equation}
    \mathcal E(X)
=
\mathbb E_{\tau\sim\pi}
\left[
\bigl(
X_t-A^\pi(S_t,A_t)
\bigr)^2
\right]
\end{equation}
where the expectation is taken over both the visitation of $(S_t,A_t)$ and the stochastic continuation of the trajectory after time $t$.

\paragraph{Graph-based TD credit.}
Because the environment is deterministic, each state-action pair $(s,a)$
has a unique successor state
\begin{equation}
    s'=T(s,a)
\end{equation}
The Bellman equation therefore gives
\begin{equation}
    Q^\pi(s,a)
=
r(s,a,s')+\gamma V^\pi(s')
\end{equation}
Using the standard definition of the policy advantage,
\begin{equation}
    A^\pi(s,a)
=
Q^\pi(s,a)-V^\pi(s)
\end{equation}
we obtain
\begin{equation}
A^\pi(s,a)
=
r(s,a,s')
+\gamma V^\pi(s')
-V^\pi(s)
=
X_t^G
\end{equation}
Hence, the population estimation error of our graph-based step-level credit $X_t^G$ follows that
\begin{equation}
\label{eq:graft_advantage_error}
    \mathcal E(X^G)
=
\mathbb E_{\tau\sim\pi}\left[
\bigl(
X_t^G-A^\pi(S_t,A_t)
\bigr)^2
\right]
=
0
\end{equation}

\paragraph{State-grouped step-level credit.}
Recall that the population counterpart of the state-grouped step-level signal in GiGPO \citep{GiGPO} is
\begin{equation}
\label{eq:gigpo_advantage}
    X_t^S
=
G_t-V^\pi(S_t)
\end{equation}
where $ G_t
=
\sum_{k=t}^{T_\tau}
\gamma^{k-t}r_k$ is the sampled return from time $t$.

For a fixed state-action pair $(s,a)$, subtracting the true policy advantage
from Equation~(\ref{eq:gigpo_advantage}) yields
\begin{align}
X_t^S-A^\pi(s,a)
&=
G_t-V^\pi(s)
-
\left[
Q^\pi(s,a)-V^\pi(s)
\right]
\nonumber\\
&=
G_t-Q^\pi(s,a)
\end{align}
By the definition of the action-value function,
\begin{equation}
    Q^\pi(s,a)
=
\mathbb E[G_t\mid S_t=s,A_t=a]
\end{equation}
Therefore,
\begin{align}
\mathbb E\left[
\bigl(
X_t^S-A^\pi(s,a)
\bigr)^2
\mid S_t=s,A_t=a
\right]
&=
\mathbb E\left[
\bigl(
G_t-Q^\pi(s,a)
\bigr)^2
\mid s,a
\right]
\nonumber\\
&=
\operatorname{Var}
\left(
G_t\mid S_t=s,A_t=a
\right)
\end{align}
Taking the expectation over the visitation distribution gives
\begin{equation}
\label{eq:gigpo_advantage_error}
    \mathcal E(X^S)
=
\mathbb E_{(s,a)\sim d^\pi}
\left[
\operatorname{Var}
\left(
G_t\mid S_t=s,A_t=a
\right)
\right]
\end{equation}
In particular, $\mathcal E(X^S)\ge 0$. We next establish when the inequality is strict. Under sparse rewards, suppose that
\begin{equation}
   r_k=0
\quad\text{for }k<T_\tau,
\qquad
R(\tau)=r_{T_\tau} 
\end{equation}
where a failed trajectory receives $R(\tau)=0$ and a successful trajectory
receives $R(\tau)>0$. The return from time $t$ is then
\begin{equation}
    G_t
=
\gamma^{T_\tau-t}R(\tau)
\end{equation}
For every $(s,a)\in\Omega$, both successful and failed continuations occur
with positive probability. Consequently,
\begin{equation}
    \Pr(G_t=0\mid s,a)>0 \qquad \text{and} \qquad \Pr(G_t>0\mid s,a)>0
\end{equation}
Thus, $G_t$ is not constant conditional on $(s,a)$, which implies
\begin{equation}
\label{eq:var_gt}
    \operatorname{Var}(G_t\mid s,a)>0,
\qquad
(s,a)\in\Omega
\end{equation}
If $\Omega$ has positive visitation probability $d^\pi(\Omega)>0$, then Equations~(\ref{eq:gigpo_advantage_error}) and~(\ref{eq:var_gt}) imply $\mathcal E(X^S)>0$.

\paragraph{Trajectory-level credit.}
Let
\begin{equation}
   b_E
=
\mathbb E_{\tau \sim \pi}[R(\tau)\mid S_0=s_0] 
\end{equation}
denotes the population trajectory-level baseline for the fixed task. The
population counterpart of the trajectory-level credit credit in GRPO is
\begin{equation}
    X_t^E
=
R(\tau)-b_E
\end{equation}
Although the same trajectory-level signal is assigned to every step in the
trajectory, the desired target at step $t$ is the state-dependent advantage
$A^\pi(S_t,A_t)$. For a fixed state-action pair $(s,a)$, define
\begin{equation}
    \mu_E(s,a)
=
\mathbb E[X_t^E\mid S_t=s,A_t=a]
\end{equation}
The estimation error can be decomposed as
\begin{equation}
X_t^E-A^\pi(s,a)
=
\left[
X_t^E-\mu_E(s,a)
\right]
+
\left[
\mu_E(s,a)-A^\pi(s,a)
\right]
\end{equation}
Squaring both sides and taking the conditional expectation gives
\begin{equation}
\label{eq:grpo_advantage_error}
\mathbb E\left[
\bigl(
X_t^E-A^\pi(s,a)
\bigr)^2
\mid S_t=s,A_t=a
\right]=
\operatorname{Var}(X_t^E\mid s,a)
+
\left[
\mu_E(s,a)-A^\pi(s,a)
\right]^2
\end{equation}
The above derivation is because the cross term has conditional expectation zero.

Since $b_E$ is a constant for the fixed task, thus
\begin{equation}
\label{eq:var_episode}
   \operatorname{Var}(X_t^E\mid s,a)
=
\operatorname{Var}(R(\tau)\mid s,a) 
\end{equation}
Substituting Equation~(\ref{eq:var_episode}) into Equation~(\ref{eq:grpo_advantage_error}) yields
\begin{equation}
\mathbb E\left[
\bigl(
X_t^E-A^\pi(s,a)
\bigr)^2
\mid s,a
\right]
=
\operatorname{Var}(R(\tau)\mid s,a)
+
\left[
\mu_E(s,a)-A^\pi(s,a)
\right]^2
\end{equation}
Taking the expectation over $d^\pi$ gives
\begin{equation}
\mathcal E(X^E)
=
\mathbb E_{(s,a)\sim d^\pi}
\left[
\operatorname{Var}(R(\tau)\mid s,a)
\right]
+
\mathbb E_{(s,a)\sim d^\pi}
\left[
\left(
\mu_E(s,a)-A^\pi(s,a)
\right)^2
\right]
\end{equation}
Both terms on the right-hand side are non-negative, and hence $\mathcal E(X^E)\ge 0$.

For every $(s,a)\in\Omega$, both successful and failed trajectories occur
with positive probability. Therefore,
\begin{equation}
    \Pr(R(\tau)=0\mid s,a)>0 \qquad \text{and} \qquad \Pr(R(\tau)>0\mid s,a)>0
\end{equation}
It follows that
\begin{equation}
    \operatorname{Var}(R(\tau)\mid s,a)>0,
\qquad
(s,a)\in\Omega
\end{equation}
If $d^\pi(\Omega)>0$, Equation~(\ref{eq:grpo_advantage_error}) therefore implies $\mathcal E(X^E)>0$.

\paragraph{Comparison.}
Equation~(\ref{eq:graft_advantage_error}) establishes that $\mathcal E(X^G)=0$, Equations~(\ref{eq:gigpo_advantage_error}) and~(\ref{eq:grpo_advantage_error}) establish that $\mathcal E(X^S)\ge0,
\mathcal E(X^E)\ge0 \nonumber$. Therefore,
\begin{equation}
\label{eq:lower_eq}
    \mathcal E(X^G)
\le
\min\left\{
\mathcal E(X^S),
\mathcal E(X^E)
\right\}
\end{equation}

Furthermore, if $\Omega$ has positive visitation probability under $\pi$,
Equations~(\ref{eq:var_gt}) and~(\ref{eq:var_episode}) give 
\begin{equation}
\label{eq:lower}
    \mathcal E(X^S)>0, \qquad \mathcal E(X^E)>0
\end{equation}
Combining Equations~(\ref{eq:lower_eq}) and~(\ref{eq:lower}), we obtain the strict inequality
\begin{equation}
    \mathcal E(X^G)
<
\min\left\{
\mathcal E(X^S),
\mathcal E(X^E)
\right\}
\end{equation}
This completes the proof.
\end{proof}

\paragraph{Remark.}
Proposition~\ref{prop:advantage_error} compares the population credit
signals underlying the three methods. In practice, GRAFT replaces
$V^\pi$ with the graph-based estimator $\widehat V$. For
\[
\widehat X_t^G
=
r_t+\gamma\widehat V(S_{t+1})-\widehat V(S_t)
\]
its deviation from the population graph-based TD credit satisfies
\[
\widehat X_t^G-X_t^G
=
\gamma\left[
\widehat V(S_{t+1})-V^\pi(S_{t+1})
\right]
-
\left[
\widehat V(S_t)-V^\pi(S_t)
\right]
\]
Thus, the empirical graph-based step-level advantage estimator approaches the zero-error population target as the graph-based value estimates become more accurate.

\begin{proposition}[Lower Finite-Sample Advantage Estimation Error]
\label{prop:advantage_error2}
Consider $N\ge2$ independent length-$T$ rollouts from the same initial
state under a fixed policy $\pi$ in a deterministic environment with sparse terminal-state rewards $R\in[0,1]$, and $\gamma\in(0,1)$. Let
\[
X_t^G=\gamma \widehat{V}(S_{t+1})-\widehat{V}(S_t),\quad
X_t^S=G_t-\overline G(S_t),\quad
X_t^E=R-\overline R
\]
be the unnormalized graph-based, state-grouped, and trajectory-level credits of GRAFT, GiGPO, and GRPO, respectively. $\widehat{V}$ is the empirical Bellman solution with $\widehat{V}(S_T)=R$, $G_t=\gamma^{T-t}R$,
$\overline G(u)$ averages over all visits to $u$ across trajectories
and time steps, and $\overline R$ is the whole-group mean.
Let $e_t=\widehat{V}(S_t)-V^\pi(S_t)$ denotes the state-value estimation error. Define the mean-squared advantage estimation error as
$\mathcal E(X)=\mathbb E_{\tau \sim \pi}[(X_t-A^\pi(S_t,A_t))^2]$,
where the expectation is taken over the sampled rollout group and an action selected uniformly from that group. If
\begin{equation}
\label{eq:finite_sample_accuracy}
\begin{aligned}
\|\gamma e_{t+1}-e_t\|_2
+\max\!\left\{
\|\overline G(S_t)-V^\pi(S_t)\|_2,
\sqrt{\frac{\operatorname{Var}(R)}{N}}
\right\}\le
\sqrt{\mathbb E[\operatorname{Var}(G_t\mid S_t,A_t)]}
\end{aligned}
\end{equation}
then
\begin{equation}
\label{eq:finite_sample_comparison}
\mathcal E(X^G)\le
\min\{\mathcal E(X^S),\mathcal E(X^E)\}
\end{equation}
Both comparisons are strict if Eq.~\ref{eq:finite_sample_accuracy}
is strict.
\end{proposition}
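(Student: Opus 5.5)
The plan is to reduce each error to an $L^2$ norm and compare these norms via Minkowski's inequality, in the spirit of the population argument in Proposition~\ref{prop:advantage_error}. Write $\|Y\|_2=\sqrt{\mathbb E[Y^2]}$, where the expectation is over the rollout group and the uniformly selected step. Sparse terminal rewards and deterministic transitions give $A^\pi(S_t,A_t)=\gamma V^\pi(S_{t+1})-V^\pi(S_t)$. Hence the graph credit decomposes exactly as
\[
X_t^G-A^\pi(S_t,A_t)=\gamma e_{t+1}-e_t ,
\]
so that $\sqrt{\mathcal E(X^G)}=\|\gamma e_{t+1}-e_t\|_2$.

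\textbf{State-grouped credit.} Write
\[
X_t^S-A^\pi=\bigl(G_t-Q^\pi(S_t,A_t)\bigr)-\bigl(\overline G(S_t)-V^\pi(S_t)\bigr).
\]
The reverse triangle inequality in $L^2$ gives
\[
\sqrt{\mathcal E(X^S)}\ge \|G_t-Q^\pi\|_2-\|\overline G(S_t)-V^\pi(S_t)\|_2 .
\]
Conditioning on $(S_t,A_t)$, $Q^\pi$ is the conditional mean of $G_t$. This needs the continuation after $(S_t,A_t)$ in the sampled trajectory to be distributed as under $\pi$, which holds because the selected trajectory is itself a $\pi$-rollout. Therefore $\|G_t-Q^\pi\|_2=\sqrt{\mathbb E[\operatorname{Var}(G_t\mid S_t,A_t)]}=:\sigma_G$, and
\[
\sqrt{\mathcal E(X^S)}\ge \sigma_G-\|\overline G-V^\pi\|_2 .
\]
The hypothesis \eqref{eq:finite_sample_accuracy}, using the first branch of the max, gives $\|\gamma e_{t+1}-e_t\|_2\le \sigma_G-\|\overline G-V^\pi\|_2$. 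Squaring, which is legitimate because both sides are nonnegative, yields $\mathcal E(X^G)\le\mathcal E(X^S)$.

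\textbf{Trajectory-level credit.} Decompose
\[
X_t^E-A^\pi=(R-\overline R)-A^\pi=\bigl(R-Q^\pi/\gamma^{T-t}\bigr)+\dots
\]
The cleaner route is to lower-bound $\|X^E_t-A^\pi\|_2$ by $\|R-\mathbb E[R\mid S_t,A_t]\|_2$ minus the fluctuation of the group mean. Concretely, write
\[
X_t^E-A^\pi=\bigl(R-\mathbb E[R\mid S_t,A_t]\bigr)+\bigl(\mathbb E[R\mid S_t,A_t]-\mathbb E R-A^\pi\bigr)-(\overline R-\mathbb E R).
\]
The first term is orthogonal to every $(S_t,A_t)$-measurable function. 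Because $\gamma^{T-t}\le1$, we have $\operatorname{Var}(R\mid S_t,A_t)\ge\operatorname{Var}(G_t\mid S_t,A_t)$, so the sum of the first two terms has norm at least $\sigma_G$. The last term has norm $\sqrt{\operatorname{Var}(R)/N}$ by independence of the rollouts; a small correction comes from $R$ itself being one of the $N$ summands, and I would handle it by noting that the covariance only helps or is absorbed into this bound. Minkowski then gives
\[
\sqrt{\mathcal E(X^E)}\ge\sigma_G-\sqrt{\operatorname{Var}(R)/N},
\]
and the second branch of the max in \eqref{eq:finite_sample_accuracy} finishes as before. Strictness propagates directly: a strict hypothesis gives strict inequalities before squaring.

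\textbf{Main obstacle.} I expect the main difficulty to be the exact orthogonality and conditioning claims when the conditioning event is generated by a uniformly chosen step inside a finite group. The selected trajectory also enters $\overline G$ and $\overline R$, so these baselines are correlated with the trajectory's own return. I would avoid needing exact orthogonality by relying only on the triangle inequality for the baseline terms, and using orthogonality only for $G_t-Q^\pi$ against $(S_t,A_t)$-measurable quantities. That orthogonality is valid because the selected trajectory's continuation is a $\pi$-rollout.
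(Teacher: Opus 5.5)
Your proposal follows the paper's proof essentially step for step: the exact identity $X_t^G-A^\pi=\gamma e_{t+1}-e_t$; the reverse triangle inequality against $\|G_t-Q^\pi\|_2=\sigma_G$ for the state-grouped credit; and, for GRPO, the bias--variance (orthogonality) bound $\|R-\mathbb E R-A^\pi\|_2\ge\sigma_R\ge\sigma_G$ combined with Minkowski and $\|\overline R-\mathbb E R\|_2=\sqrt{\operatorname{Var}(R)/N}$. Two small clean-ups: no correction for $R$ being one of the $N$ summands is needed, since that norm is exact for an i.i.d.\ mean and Minkowski ignores correlation between the terms; and the step $\operatorname{Var}(R\mid S_t,A_t)\ge\operatorname{Var}(G_t\mid S_t,A_t)$ requires $\gamma^{T-t}$ to be $(S_t,A_t)$-measurable, i.e.\ that the finite-horizon state encodes $t$, which the paper invokes explicitly.
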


\begin{proof}
We evaluate all three credit signals on the same rollout group.
Independently of the group, select $i$ uniformly from
$\{1,\ldots,N\}$ and $t$ uniformly from $\{0,\ldots,T-1\}$,
and write
$(S_t,A_t,S_{t+1},R)=(S_{i,t},A_{i,t},S_{i,t+1},R_i)$.
For any random variable $Z$, let $\|Z\|_2=(\mathbb E[Z^2])^{1/2}$.

Following the terminal-value convention, $V^\pi(S_{i,T})=R_i$ and
\begin{equation}
\label{eq:revised_return_target}
G_{i,t}=\gamma^{T-t}R_i,\qquad
Q^\pi(S_t,A_t)=\mathbb E[G_t\mid S_t,A_t],\qquad
A^\pi=Q^\pi-V^\pi
\end{equation}
 Define the state visit group
\begin{equation}
\label{eq:revised_cross_time_group}
\mathcal I(u)=\{(j,k):1\le j\le N,\ 0\le k<T,\ S_{j,k}=u\}
\end{equation}
The empirical baselines are
\begin{equation}
\label{eq:revised_empirical_baselines}
\overline G(u)=\frac1{|\mathcal I(u)|}
\sum_{(j,k)\in\mathcal I(u)}G_{j,k},\qquad
\overline R=\frac1N\sum_{j=1}^N R_j
\end{equation}
All occurrences are included, including the selected occurrence and
repeated visits within one trajectory. The evaluated group is always
nonempty. For brevity in the proof, set
\begin{equation}
\label{eq:revised_noise_baseline_errors}
\begin{aligned}
\sigma_G&=\sqrt{\mathbb E[\operatorname{Var}(G_t\mid S_t,A_t)]},\\
\sigma_R&=\sqrt{\mathbb E[\operatorname{Var}(R\mid S_t,A_t)]},\\
\eta_S&=\|\overline G(S_t)-V^\pi(S_t)\|_2,\qquad
\eta_E=\sqrt{\operatorname{Var}(R)/N}
\end{aligned}
\end{equation}

\paragraph{Graph-based TD credit.}
Because the environment is deterministic, the true Bellman equation
at an executed transition gives
\begin{equation}
A^\pi(S_t,A_t)=\gamma V^\pi(S_{t+1})-V^\pi(S_t)
\end{equation}
Consequently,
\begin{equation}
\begin{aligned}
X_t^G-A^\pi(S_t,A_t)
&=\gamma\bigl[\widehat{V}(S_{t+1})-V^\pi(S_{t+1})\bigr]-\bigl[\widehat{V}(S_t)-V^\pi(S_t)\bigr]\\
&=\gamma e_{t+1}-e_t
\end{aligned}
\end{equation}
Thus the finite-sample graph-credit error is exactly
\begin{equation}
\label{eq:revised_graph_error}
\sqrt{\mathcal E(X^G)}=\|\gamma e_{t+1}-e_t\|_2
\end{equation}

\paragraph{State-grouped step-level credit.}
Subtracting the true advantage yields
\begin{equation}
\begin{aligned}
X_t^S-A^\pi(S_t,A_t)
={}&\bigl[G_t-Q^\pi(S_t,A_t)\bigr]
-\bigl[\overline G(S_t)-V^\pi(S_t)\bigr]
\end{aligned}
\end{equation}
By the definition of $Q^\pi$,
\begin{equation}
\|G_t-Q^\pi(S_t,A_t)\|_2^2
=\mathbb E[\operatorname{Var}(G_t\mid S_t,A_t)]
=\sigma_G^2
\end{equation}
Applying the reverse triangle inequality gives
\begin{equation}
\label{eq:revised_step_error_bound}
\sqrt{\mathcal E(X^S)}\ge(\sigma_G-\eta_S)_+,
\qquad (x)_+=\max\{x,0\}
\end{equation}
The argument permits arbitrary dependence between the sampled return
and its empirical state-grouped baseline. In particular, it applies
to cross-time groups and repeated visits without treating them as
independent samples.

\paragraph{Trajectory-level credit.}
Let $b_E=\mathbb E[R]$ and define the population-centered signal
$Y_t^E=R-b_E$. Its conditional bias--variance decomposition is
\begin{equation}
\begin{aligned}
\mathbb E[(Y_t^E-A^\pi(S_t,A_t))^2]
={}&\mathbb E[\operatorname{Var}(R\mid S_t,A_t)]\\
&+\mathbb E\!\left[
\bigl(\mathbb E[R\mid S_t,A_t]-b_E-A^\pi(S_t,A_t)\bigr)^2
\right]
\end{aligned}
\end{equation}
Hence $\|Y_t^E-A^\pi(S_t,A_t)\|_2\ge\sigma_R$.
Since the true state includes $t$ and $G_t=\gamma^{T-t}R$,
\begin{equation}
\sigma_G^2
=\mathbb E[\gamma^{2(T-t)}\operatorname{Var}(R\mid S_t,A_t)]
\le\sigma_R^2
\end{equation}
The empirical baseline includes the selected trajectory, so
\begin{equation}
X_t^E=R_i-\overline R
=\left(1-\frac1N\right)R_i-\frac1N\sum_{j\ne i}R_j
\end{equation}
The $N$ complete rollouts are independent and identically distributed.
Therefore,
\begin{equation}
\|X_t^E-Y_t^E\|_2^2
=\mathbb E[(\overline R-b_E)^2]
=\frac{\operatorname{Var}(R)}N
=\eta_E^2
\end{equation}
A further application of the reverse triangle inequality gives
\begin{equation}
\label{eq:revised_episode_error_bound}
\sqrt{\mathcal E(X^E)}
\ge(\sigma_R-\eta_E)_+
\ge(\sigma_G-\eta_E)_+
\end{equation}
No independence between $R_i$ and $\overline R$ is assumed.

\paragraph{Comparison.}
Set $\eta=\max\{\eta_S,\eta_E\}$.
Equations~\ref{eq:revised_step_error_bound} and
\ref{eq:revised_episode_error_bound} imply
\begin{equation}
\min\!\left\{\sqrt{\mathcal E(X^S)},\sqrt{\mathcal E(X^E)}\right\}
\ge(\sigma_G-\eta)_+
\end{equation}
Under condition~\ref{eq:finite_sample_accuracy},
\begin{equation}
\sqrt{\mathcal E(X^G)}
=\|\gamma e_{t+1}-e_t\|_2
\le\sigma_G-\eta
\end{equation}
The right-hand side is nonnegative. Squaring proves
\begin{equation}
\mathcal E(X^G)\le
\min\{\mathcal E(X^S),\mathcal E(X^E)\}
\end{equation}
If condition~\ref{eq:finite_sample_accuracy} is strict, the preceding
comparison of square roots is strict and $\sigma_G-\eta>0$, yielding
strict inequalities against both baselines. This completes the proof.
\end{proof}

\paragraph{Remark.}
Condition~\ref{eq:finite_sample_accuracy} relates the admissible graph-credit error to the accuracy of the empirical
baselines. For a fixed level of conditional return variability, smaller baseline errors increase the margin
$\sigma_G-\max\{\eta_S,\eta_E\}$, allowing a larger graph-credit estimation error while preserving the comparison. With correct state aggregation and adequate coverage of
relevant states, additional independent rollouts can improve both the empirical baselines and the transition statistics
used for graph-value estimation. These improvements act in complementary directions: more accurate baselines enlarge
the admissible error margin, while more accurate graph values can reduce the graph-credit error. Thus, the condition describes a regime in which reliable cross-trajectory statistics support more accurate credit assignment, without requiring exact value estimates.

\section{Additional Experimental Analysis}
\label{sec:additional_experiments}

\subsection{Hyperparameter Ablation Studies}

Tab.\ref{tab:hyperparameter_results} investigates the sensitivity of our method to the discount factor $\gamma$ and the weighting factor $\lambda$. For $\gamma$, we observe that larger values yield better performance. This is expected in long-horizon agentic tasks, where a higher $\gamma$ facilitates the backward propagation of sparse terminal rewards to earlier steps; conversely, lower values may cause initial steps to receive insufficient reward signals for effective learning. Regarding $\lambda$, a larger value indicates a longer backward-looking horizon during Graph GAE estimation, incorporating more k-hop step estimators to refine the single-step estimator. 
Since the optimal lookback depth varies with task complexity, $\lambda$ is typically tuned per benchmark. Overall, the results demonstrate robust performance across a reasonable range of hyperparameters.

\begin{table*}[ht]
\centering
\caption{Hyperparameter ablation studies on the discount factor $\gamma$ and the weighing factor $\lambda$ in Graph GAE (Eq.(\ref{eq:graph_gae})).}
\label{tab:hyperparameter_results}
\resizebox{0.6\linewidth}{!}{
\begin{tabular}{c | cc | c | cc}
\toprule
$\gamma$ & ALFWorld & WebShop & $\lambda$ & ALFWorld & WebShop\\
\midrule
0.8
  & 95.6
  & 76.3
  & 0.5
  & 95.6
  & 79.7
   \\
0.9
  & 96.1
  & 79.2
  & 0.6
  & 97.2
  & 81.2
  \\
0.95
  & 96.4
  & 78.1
  & 0.8
  & 96.4
  & 82.8
  \\
0.97
  & 96.9
  & 80.5
  & 0.9
  & 96.4
  & 77.3
  \\
 0.99
  & 96.9
  & 81.0
  & 0.95
  & 97.7
  & 81.2
   \\
\bottomrule
\end{tabular}}
\end{table*}

\subsection{Node Group Size}
\label{sec:node_group_size}

\begin{figure}[htbp]
\centering
\begin{minipage}[b]{0.48\textwidth}
\centering
\includegraphics[width=\textwidth]{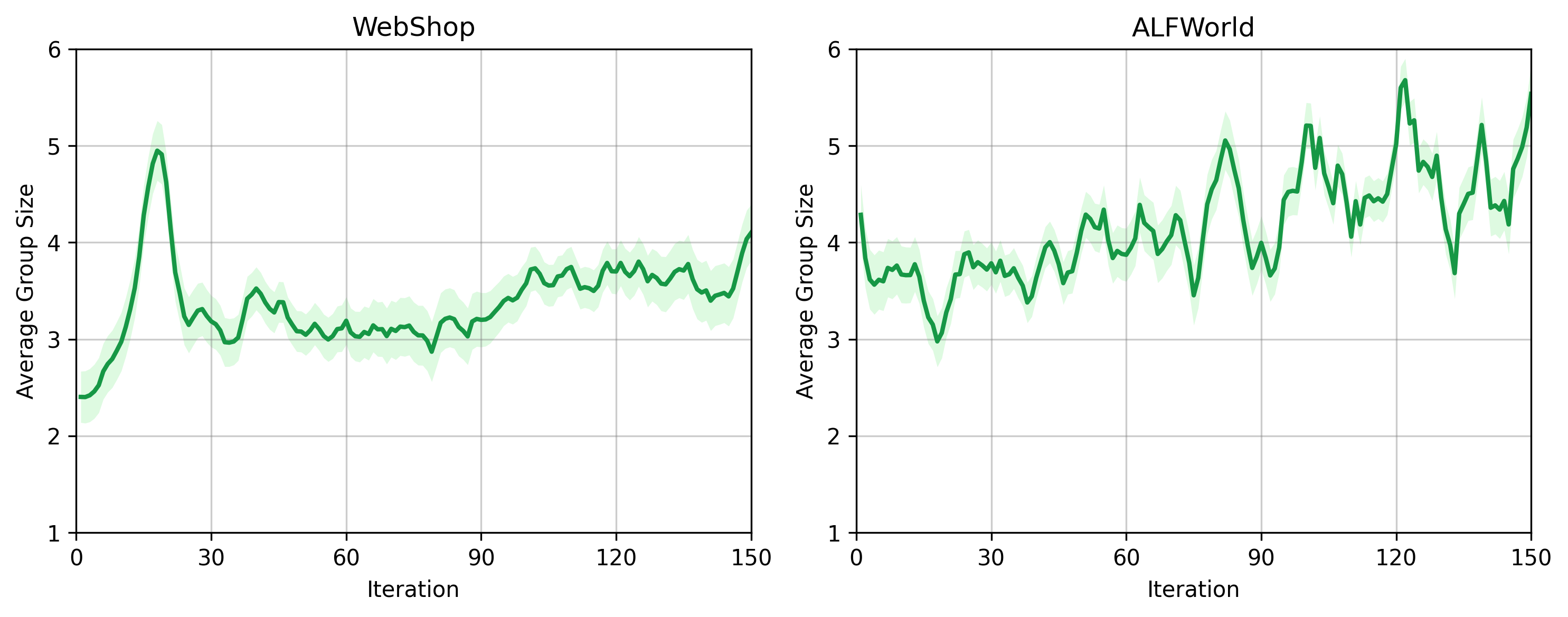} 
\caption{Dynamics of average node group size during the training process.}
\label{fig:group_size_dynamics}
\end{minipage}
\hfill 
\begin{minipage}[b]{0.48\textwidth}
\centering
\includegraphics[width=\textwidth]{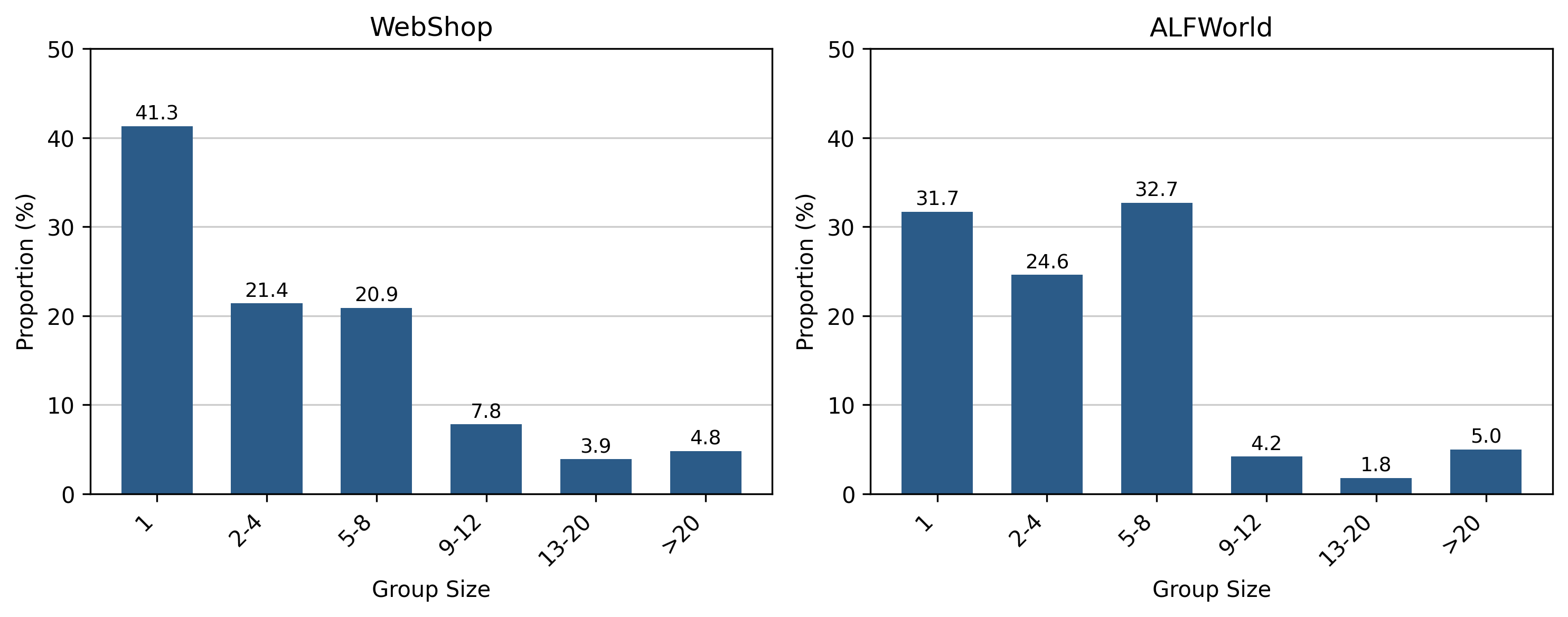} 
\caption{The distributions of node group size during training.}
\label{fig:group_size_distributions}
\end{minipage}
\end{figure}

Here, we present the dynamics of the average node group size (\emph{i.e.}, the number of outgoing edges per node) during training and the distributions of group size. As shown in Fig.\ref{fig:group_size_dynamics}, the average group size for WebShop and ALFWorld is around 4 and 5. This indicates that many environmental observations recur in distinct trajectories yet lead to divergent outcomes, underscoring the necessity of aggregating trajectories into a unified trajectory graph. In WebShop, the average group size initially increases, suggesting that the agent’s decisions converge
during early training, leading to more frequent state revisits. Subsequently, the group size decreases
because fine-grained credit assignment helps the agent to eliminate redundant steps, leading to shorter trajectories and fewer total observations. In ALFWorld, the average group size increases, this is because that ALFWorld tasks inherently require repetitive actions, forcing the agent to revisit identical states. This inherent repetition amplifies the upward trend in group size, outweighing the downward trend caused by redundant step removal. Furthermore, as shown in Fig.\ref{fig:group_size_distributions}, on both WebShop and ALFWorld benchmarks, more than 60\% of steps fall into groups of 2 or larger, meaning the more faithful step-level advantage estimation benefit from the shared state group-aggregation mechanism.

\subsection{Information Propagation Degree}
\label{sec:information_degree}

\begin{figure}[htbp]
\centering
\begin{minipage}[b]{0.48\textwidth}
\centering
\includegraphics[width=\textwidth]{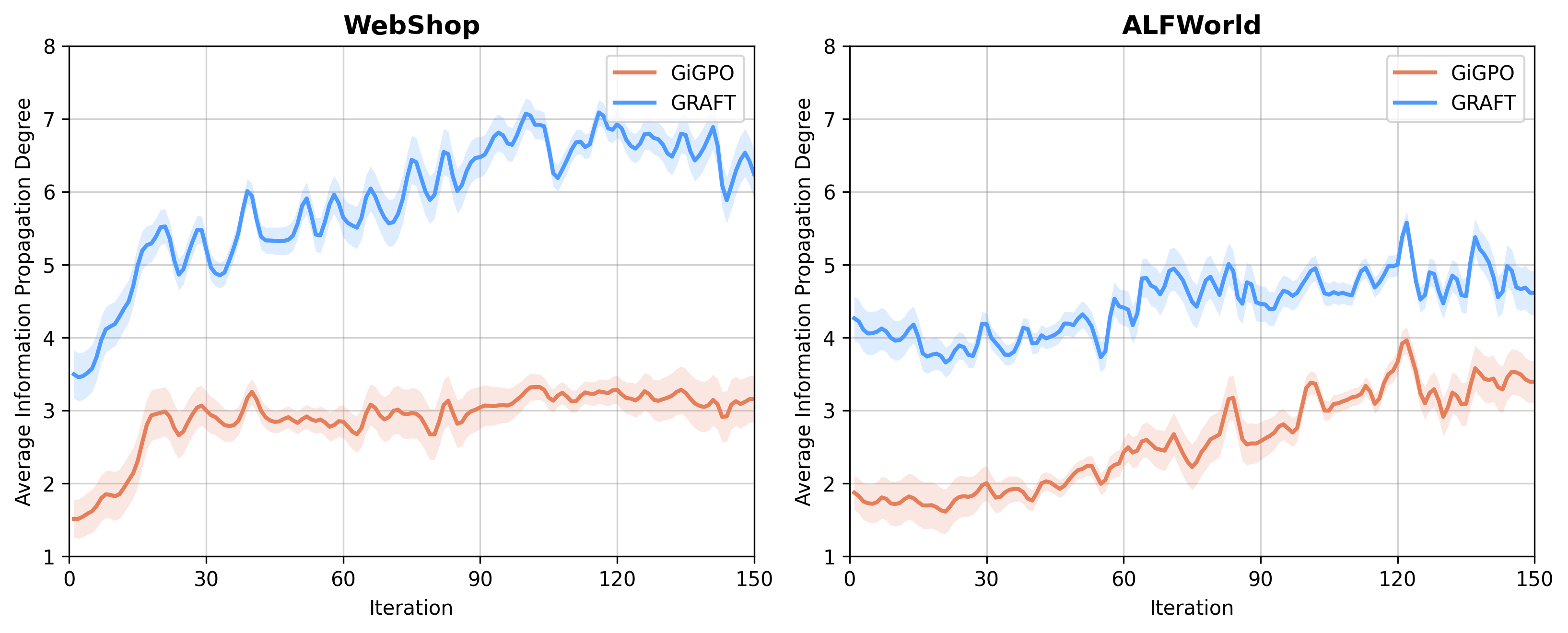} 
\caption{Dynamics of average information propagation degree during training.}
\label{fig:degree_dynamics}
\end{minipage}
\hfill 
\begin{minipage}[b]{0.48\textwidth}
\centering
\includegraphics[width=\textwidth]{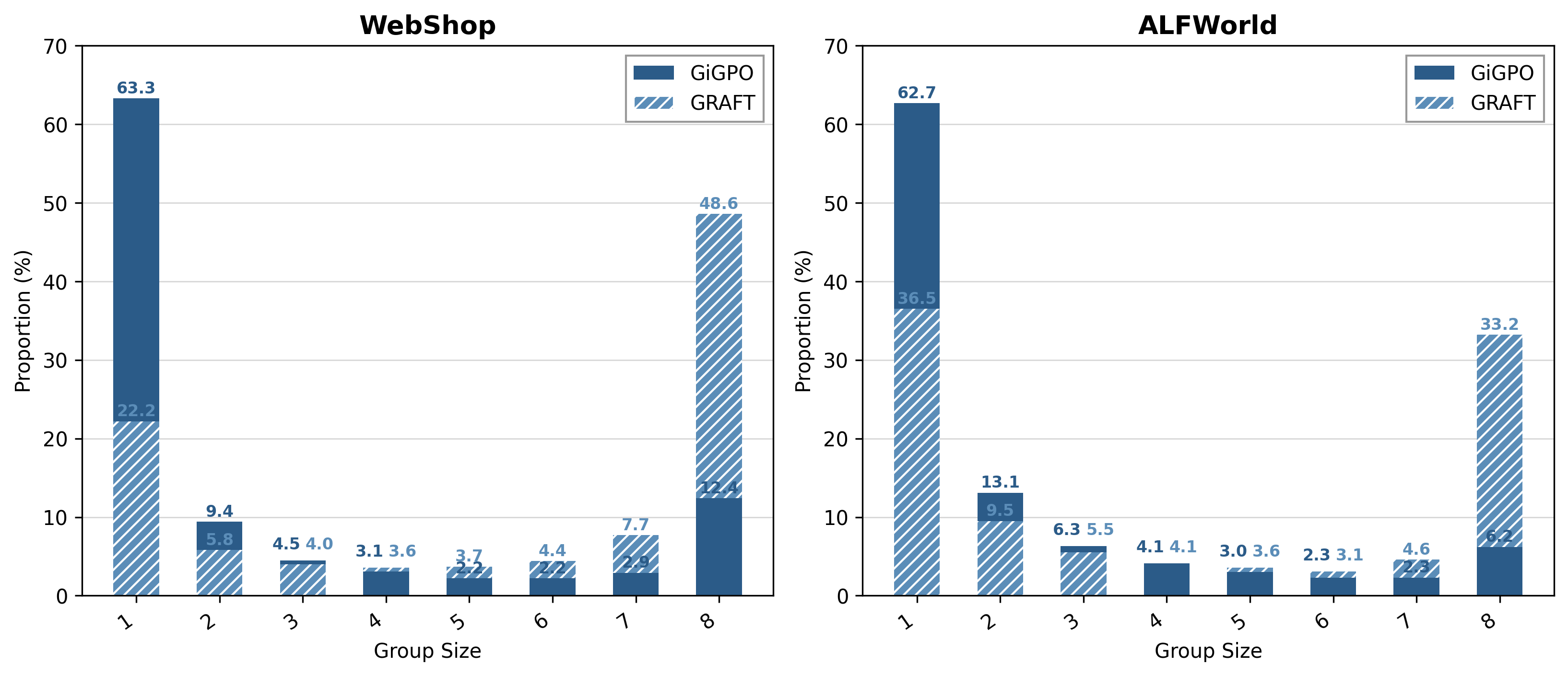} 
\caption{The distributions of information propagation degree.}
\label{fig:degree_distributions}
\end{minipage}
\end{figure}

We further analyze the dynamics and distributions of the information propagation degree during training. The information propagation degree is defined as the number of distinct trajectories contributing reward information to the value estimation of a node (or state), as starting from one node in the graph can lead to multiple different trajectories. As shown in Fig.\ref{fig:degree_dynamics}, our method consistently achieves a higher information propagation degree than GiGPO across training. This indicates that graph-based state-value estimation will aggregate reward signals from a broader set of trajectories for each node. In contrast, GiGPO's simple state grouping restricts reward propagation to the existing trajectory, preventing cross-trajectory credit flow. The results further support the analysis in Appendix \ref{sec:discuss_with_gigpo} that: graph-based state-value estimation will be more reliable as it can leverage more reward information. Furthermore, in Fig.\ref{fig:degree_distributions}, up to 60\% of the steps in GiGPO receive reward information from only a single trajectory, whereas in our GRAFT, this proportion significantly drops to 20\%.

\subsection{Additional Results}
\label{sec:additional_results}

\textbf{Performance based on Qwen3 model series.} In Tab.\ref{tab:main_results}, for the convenience of direct and fair comparison with prior methods, we conducted experiments based on the Qwen2.5-Instruct model series. Here, to further validate generalizability, we further extend our evaluation to the Qwen3 family (\emph{e.g.}, Qwen3-4B and Qwen3-8B, we reproduce GiGPO and GraphGPO for comparison). The results in Tab.\ref{tab:qwen3_results} show that our method significantly outperforms GiGPO and GraphGPO on both Qwen3-4B and Qwen3-8B models. These evaluations confirm that our method is not tied to a specific base model, but can be effectively transferred to other base models.

\begin{table*}[ht]
\centering
\caption{Performance on ALFWorld and WebShop. These results are based on the Qwen3 model series, \emph{e.g.}, Qwen3-4B and Qwen3-8B.}
\label{tab:qwen3_results}
\resizebox{\linewidth}{!}{
\begin{tabular}{ll ccccccc cc}
\toprule
\multirow{2}{*}{\textbf{Type}} & \multirow{2}{*}{\textbf{Method}}
  & \multicolumn{7}{c}{\textbf{ALFWorld}}
  & \multicolumn{2}{c}{\textbf{WebShop}} \\
\cmidrule(lr){3-9} \cmidrule(lr){10-11}
 & & \textbf{Pick} & \textbf{Look} & \textbf{Clean} & \textbf{Heat} & \textbf{Cool} & \textbf{Pick2} & \textbf{All}
   & \textbf{Score} & \textbf{Succ.} \\
\midrule
\multicolumn{11}{c}{\textit{Qwen3-4B}} \\
\midrule

RL Training & GiGPO              & 100.0 & 81.8 & 77.3 & 54.5 & 72.0 & 84.0 & 82.0 & 84.1 & 70.6 \\

RL Training & GraphGPO              & 100.0 & 70.0 & 83.3 & 78.6 & 92.9 & 69.6 & 85.9 & 85.8 & 78.9 \\
\cmidrule(lr){1-11}

\rowcolor{paleblue} RL Training & \textbf{GRAFT (Ours)} & 100.0 &  90.0 & 91.7 & 92.9 & 100.0 & 91.3 & 95.3 & 89.6 & 82.8 \\
\midrule
\multicolumn{11}{c}{\textit{Qwen3-8B}} \\
\midrule
RL Training & GiGPO              & 91.7 & 73.3 & 82.8 & 85.7 & 89.2 & 81.2 & 86.7 & 82.7 & 71.1 \\

RL Training & GraphGPO              & 95.3 & 90.0 & 83.3 & 78.6 & 100.0 & 95.7 & 91.4 & 88.0 & 78.9 \\
\cmidrule(lr){1-11}

\rowcolor{paleblue} RL Training & \textbf{GRAFT (Ours)} & 100.0 & 100.0 & 100.0 & 87.5 & 90.5 & 94.7 & 96.1 & 90.9 & 80.5 \\
\bottomrule
\end{tabular}}
\end{table*}

\subsection{Case Study}
\label{sec:case_study}
\begin{figure*}[htbp]
\centering
\includegraphics[width=\textwidth]{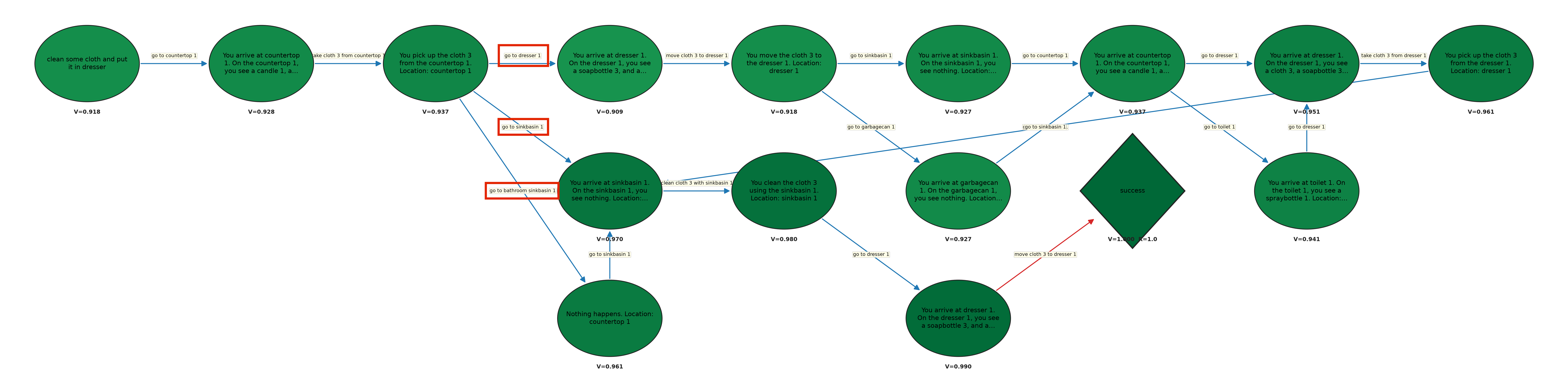} 
\caption{A case study illustrating the constructed graph during training on ALFWorld. In this task, all trajectories finally reach success.}
\label{fig:alfworld_graph1}
\end{figure*}

 To provide a more intuitive understanding of why our method can yield more faithful step-level advantage estimates, we visualize some case graphs constructed during training on ALFWorld. The graphs are shown in Fig.\ref{fig:alfworld_graph1} and \ref{fig:alfworld_graph2}. In Fig.\ref{fig:alfworld_graph1}, all trajectories ultimately lead to success, whereas Fig.\ref{fig:alfworld_graph2} contains a mix of successful and failing trajectories. Node color reflects state-value: red indicates lower values, while green indicates higher values. The visualization reveals that state-values increase with proximity to successful terminal states and remain low for nodes from which success is unreachable. Furthermore, nodes with mixed outcomes—where some trajectories succeed and others fail—exhibit intermediate state-values, reflecting the expected value under the current policy.

In Fig.\ref{fig:alfworld_graph1}, the task is to clean some cloth and put it in dresser. Although all trajectories ultimately succeed, our method still effectively identifies specific erroneous steps. Consider the actions highlighted by the red boxes, the preceding state is ``you pick up the cloth 3 from the countertop 1''. According to the task goal, after obtaining the cloth, it needs to be cleaned first and then placed in the dresser. However, the action ``go to dresser 1'' is to directly put the cloth in the dresser, which results in deviating from the goal. Subsequently, with making multiple corrective actions, the trajectory ultimately succeeds. In the graph, this action leads to a lower-value state, resulting in a negative advantage to successfully capture this erroneous action. The action ``go to sinkbasin 1'' instead results in successful cleaning before placing in the dresser, thereby transitioning the agent into a higher-value state. Meanwhile, ``go to bathroom sinkhole 1'' is an invalid action, and it takes an additional step to reach the ``sinkhole 1'' state. It can be found that the computed advantage of ``go to bathroom sinkhole 1'' is also slightly lower than that of ``go to sinkhole 1''. Overall, this qualitative analysis demonstrates that our method can discriminate among individual actions based on their true contributions, whereas trajectory-level advantage estimation assigns uniform credit to all steps in a successful trajectory, failing to capture fine-grained action quality.

\begin{figure*}[htbp]
\centering
\includegraphics[width=\textwidth]{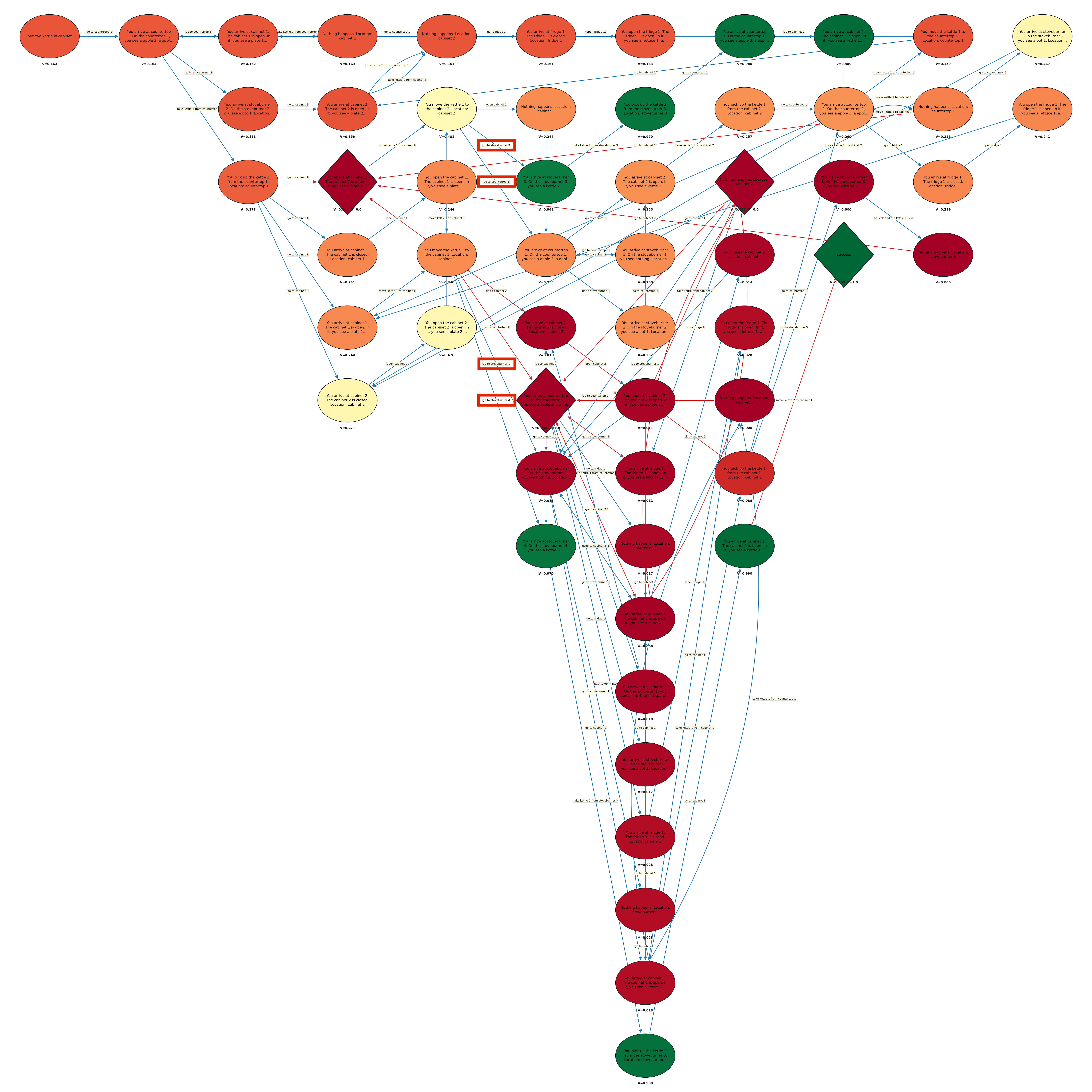} 
\caption{A case study illustrating the constructed graph during training on ALFWorld. This graph contains a mix of successful and failing trajectories.}
\label{fig:alfworld_graph2}
\end{figure*}

In Fig.\ref{fig:alfworld_graph2}, the task is to put two kettle in cabinet. Let's first focus on the actions marked in the upper red boxes. The preceding state is ``you move the kettle 1 to the cabinet 2''. Now that a kettle has been placed in the cabinet, when action ``go to stoveburner 3'' is taken, it enables the agent to locate the second kettle. Consequently, this action leads to a significant increase in the subsequent state values. In contrast, ``go to countertop 1'' returns the agent to a location where kettle 1 was originally found; since the kettle has already been moved, no new kettle can be obtained there. This results in a marked decrease in downstream state values. A similar situation appears in the lower red boxes: ``go to stoveburner 1'' enters a state without a kettle, resulting in a low value, whereas ``go to stoveburner 4'' allows the agent to find kettle 3, again producing a substantial increase in value. This example demonstrates that, in tasks containing both successful and failed trajectories, our method effectively identifies critical decision points, assigning higher advantages to steps pivotal to success and lower advantages to those leading toward failure.

\section{Experiment Details}
\label{sec:experiment_details}

\subsection{Descriptions of Benchmarks}

ALFWorld is a text-based environment aligned with the ALFRED embodied AI benchmark, which is designed to assess an agent's ability to perform long-horizon, multi-step decision-making tasks. WebShop is a complex web-based interactive environment that tests LLM agents in realistic online shopping scenarios, requiring them to navigate a realistic web interface to purchase items matching user specifications. The SearchQA benchmark comprises several widely-used search-augmented QA datasets, including single-hop QA (NQ \citep{NQ}, TriviaQA \citep{triviaQA}, PopQA \citep{popQA}) and multi-hop QA (HotpotQA \citep{hotpotQA}, 2Wiki \citep{2wiki}, MuSiQue \citep{musique}, Bamboogle \citep{bamboogle}).

\subsection{Training Details}
\label{sec:training_details}

To ensure a direct and fair comparison with previous methods, we follow the training and evaluation configurations from GiGPO \citep{GiGPO} across all benchmarks. The detailed settings for each benchmark are described below:

\textbf{Hyperparameters for ALFWorld}. In ALFWorld, since states are predefined strings provided by the environment, state aggregation for building graph node is based on exact matching. We set the maximum prompt and response lengths to 2048 and 512 tokens, respectively. Each trajectory allows up to 50 environment steps. Training runs for 150 steps with a policy learning rate of $1e^{-6}$. We employ a rule-based reward scheme: $+10$ for success, 0 for failure. To handle invalid actions
generated by the agent, we apply a reward penalty of $-0.1$. For all group-based RL methods (GRPO, GiGPO, HCAPO, GraphGPO), we use a group size of $G = 8$ and sample 16 different tasks per rollout step, yielding $16 \times 8 = 128$ parallel environments. In contrast, PPO uses 128 independent environments for fair comparison. The rollout and validation temperatures are set to 1.0 and 0.4, respectively. The mini-batch size is 256, and the KL-divergence loss coefficient $\beta_{KL}$ is 0.01.

\textbf{Hyperparameters for WebShop}. In WebShop, state aggregation is also based on exact matching. We set the maximum prompt and response lengths to 5120 and 512 tokens, respectively, with each episode capped at 15 environment steps. Training runs for 150 steps with a policy learning rate of $1e^{-6}$. We adopt a rule-based reward scheme: assigning $+10$ for success and 0 for failure. Invalid actions are penalized with a reward of $-0.1$. Consistent with ALFWorld, all group-based RL methods use a group size of $G = 8$ and sample 16 tasks per rollout step, yielding 128 parallel environments. PPO uses 128 distinct environments for rollouts. The rollout and validation temperatures are set to 1.0 and 0.4, respectively. The mini-batch size is 64, and $\beta_{kL}$ is 0.01.

\textbf{Hyperparameters for SearchQA}. In SearchQA, as there are no predefined state strings provided by the environment, the state aggregation is based on similarity-based matching, with the similarity threshold $\tau$ set to 0.9. state aggregation for building graph node is based on exact matching. We set the maximum prompt and response lengths to 4096 and 512 tokens, respectively. The maximum number of turns is set to 4. The learning rate is $1e^{-6}$ for the policy model. We employ a
rule-based reward scheme: assigning $+1$ for success and 0 for failure. Invalid actions are penalized with a reward of
$-0.01$. We set the training data size to 256 and use a group size of $G = 5$. The rollout and validation temperatures are set to 1.0 and 0.0, respectively. The mini-batch size is 512, and $\beta_{KL}$ is 0.001.

\subsection{Training Prompts}
\label{sec:training_prompts}

The prompts we use for LLM agents are constructed using Python-style string formatting, where placeholders enclosed in
curly braces (\emph{e.g.}, {task description}, {step count}, and {current observation}) serve as semantic slots dynamically populated at runtime via Python’s .format() function. To enrich the agent’s context, we incorporate historical information and set the history length to 2 for ALFWorld and WebShop, while retaining the full history for search-augmented QA experiments.

The $<$think$>$ $<$/think$>$ block instructs the agent to perform explicit step-by-step reasoning, thereby facilitating
Chain-of-Thought (CoT) deliberation. The $<$action$>$ $<$/action$>$ block is used to clearly indicate the final
action decision. For the search agent, reasoning traces are enclosed in $<$think$>$ $<$/think$>$, search queries in
$<$search$>$ $<$/search$>$, and final answers in $<$answer$>$ $<$/answer$>$ tags. Retrieved evidence from
the search engine is presented within $<$information$>$ $<$/information$>$ tags. The detailed prompt templates for each benchmark are provided in Fig.\ref{fig:prompt-alfworld} (ALFWorld), Fig.\ref{fig:prompt-webshop} (Webshop), Fig.\ref{fig:prompt-qa} (SearchQA).

\begin{figure}[ht]
\begin{tcolorbox}[
    title=Prompt Template for ALFWorld,
    fonttitle=\bfseries\small,
    colback=white,
    colframe=black!70,
    colbacktitle=black!80,
    coltitle=white,
    boxrule=0.5pt,
    arc=4pt,
    left=6pt, right=6pt, top=4pt, bottom=4pt
]
\small
You are an expert agent operating in the ALFRED embodied environment. Your task is to: \textcolor{promptorange}{\{task\_description\}}. Prior to this step, you have already taken \textcolor{promptorange}{\{step\_count\}} step(s). Below are the most recent \textcolor{promptorange}{\{history\_length\}} observations and the corresponding actions you took: \textcolor{promptorange}{\{action\_history\}}. You are now at step \textcolor{promptorange}{\{current\_step\}} and your current observation is: \textcolor{promptorange}{\{current\_observation\}}. Your admissible actions of the current situation are: [\textcolor{promptorange}{\{admissible\_actions\}}].

Now it's your turn to take an action. You should first reason step-by-step about the current situation. This reasoning process MUST be enclosed within \textcolor{promptred}{$<$think$>$ $<$/think$>$} tags. Once you've finished your reasoning, you should choose an admissible action for current step and present it within \textcolor{promptred}{$<$action$>$ $<$/action$>$} tags.
\end{tcolorbox}
\caption{Prompt template for ALFWorld.}
\label{fig:prompt-alfworld}
\end{figure}

\begin{figure}[ht]
    \centering
    \begin{tcolorbox}[
        title=Prompt Template for WebShop,
        fonttitle=\bfseries\small,
        colback=white,
        colframe=black!70,
        colbacktitle=black!80,
        coltitle=white,
        boxrule=0.5pt,
        arc=4pt,
        left=6pt, right=6pt, top=4pt, bottom=4pt
    ]
    \small
    You are an expert autonomous agent operating in the WebShop e-commerce environment. Your task is to: \textcolor{promptorange}{\{task\_description\}}. Prior to this step, you have already taken \textcolor{promptorange}{\{step\_count\}} step(s). Below are the most recent \textcolor{promptorange}{\{history\_length\}} observations and the corresponding actions you took: \textcolor{promptorange}{\{action\_history\}}. You are now at step \textcolor{promptorange}{\{current\_step\}} and your current observation is: \textcolor{promptorange}{\{current\_observation\}}. Your admissible actions for the current situation are: [\textcolor{promptorange}{\{available\_actions\}}].

    Now it's your turn to take one action for the current step. You should first reason step-by-step about the current situation, then think carefully which admissible action best advances the shopping goal. This reasoning process MUST be enclosed within \textcolor{promptred}{$<$think$>$ $<$/think$>$} tags. Once you've finished your reasoning, you should choose an admissible action for current step and present it within \textcolor{promptred}{$<$action$>$ $<$/action$>$} tags.
    \end{tcolorbox}
    \caption{Prompt template for WebShop.}
    \label{fig:prompt-webshop}
\end{figure}

\begin{figure}[ht]
    \centering
    \begin{tcolorbox}[
        title=Prompt Template for SearchQA,
        fonttitle=\bfseries\small,
        colback=white,
        colframe=black!70,
        colbacktitle=black!80,
        coltitle=white,
        boxrule=0.5pt,
        arc=4pt,
        left=6pt, right=6pt, top=4pt, bottom=4pt
    ]
    \small
    You are an expert agent tasked with answering the given question step-by-step. 
    
    Your question: \textcolor{promptorange}{\{task\_description\}}. 
    
    Prior to this step, you have already taken \textcolor{promptorange}{\{step\_count\}} step(s). Below is the interaction history where \textcolor{promptred}{$<$search$>$ $<$/search$>$} wrapped your past search queries and \textcolor{promptred}{$<$information$>$ $<$/information$>$} wrapped the corresponding search results returned by the external search engine.

    History: \textcolor{promptorange}{\{memory\_context\}}

    Now it's your turn to respond for the current step. You should first conduct reasoning process. This process MUST be enclosed within \textcolor{promptred}{$<$think$>$ $<$/think$>$} tags. After completing your reasoning, choose only one of the following actions (do not perform both):

    (1) If you find you lack some knowledge, you can call a search engine to get more external information using format: \textcolor{promptred}{$<$search$>$} your query \textcolor{promptred}{$<$/search$>$}.

    (2) If you have enough knowledge to answer the question confidently, provide your final answer within \textcolor{promptred}{$<$answer$>$ $<$/answer$>$} tags, without detailed illustrations. For example, \textcolor{promptred}{$<$answer$>$Beijing$<$/answer$>$}.
    \end{tcolorbox}
    \caption{Prompt template for SearchQA.}
    \label{fig:prompt-qa}
\end{figure}

\end{document}